\def\normalsize{\@setfontsize{\normalsize}{9.5bp}{12.00pt}}
\let\savedalgorithm\algorithm
\let\savedendalgorithm\endalgorithm
\theoremstyle{plain}
\newtheorem{theorem}{Theorem}[section]
\newtheorem{proposition}{Proposition}[section]
\renewcommand{\u}{\mathbf u}
\renewcommand{\v}{\mathbf v}
\newcommand{\x}{\mathbf x}
\newcommand{\w}{\mathbf w}
\newcommand{\z}{\mathbf z}
\newcommand{\e}{\boldsymbol e}
\newcommand{\A}{\mathbf A}
\newcommand{\C}{\mathbf C}
\newcommand{\X}{\mathbf X}
\renewcommand{\H}{\mathbf H}
\newcommand{\bmu}{{\boldsymbol \mu}}
\newcommand{\bSigma}{ {\boldsymbol \Sigma } }
\newcommand{\brho}{ {\boldsymbol \rho} }
\newcommand{\Q}{\mathbf Q}
\newcommand{\q}{\mathbf q}
\newcommand{\etal}{{\em et al.}\xspace}
\newcommand{\ie}{{\em i.e.}\xspace}
\newcommand{\st}{{\rm s.t.}\xspace}
\newcommand{\comment}[1]{}
\def\psd{\succcurlyeq}  
\def\nsd{\preccurlyeq}
\newcommand{\cH}{\mathcal H}
\newcommand{\cX}{\mathcal X}
\def\T{{\!\top}}
\def\Real{\mathbb{R}}
\newcommand{\norm}[2][2]{\ensuremath{ \left\| #2 \right\|_{ \mathrm{#1} } } }
\newcommand{\ADot}{ \ensuremath{  - \,} }
\def\NICTAFunding{{NICTA is funded by the 
Australian Government as represented by the Department of
Broadband, Communications and the Digital Economy and the 
Australian Research Council through the
ICT Center of Excellence program.}\xspace
}
\begin{document}

\title{
      Optimally Training a Cascade Classifier
}

\author{
   Chunhua Shen,
   Peng Wang,
   and Anton van den Hengel
 \IEEEcompsocitemizethanks{
 \IEEEcompsocthanksitem
  C.~Shen is with NICTA, Canberra Research Laboratory, ACT 2601, Australia;
  and Australian National University, Canberra, ACT 0200, Australia. 
  \NICTAFunding.
  C. Shen's research is in part supported by 
  the Australian Research Council through the  
  Research in Bionic Vision Science and Technology Initiative. 
  E-mail: chunhua.shen@nicta.com.au.
 \IEEEcompsocthanksitem
  P. Wang is with Beihang University, Beijing 100191, China.
  E-mail: peng.wang@nicta.com.au. His contribution was made while
  visiting NICTA, Canberra Research Laboratory, and Australian
  National University.
\IEEEcompsocthanksitem
  A. van den Hengel is with University
  of Adelaide, Adelaide, SA 5005, Australia. 
  A. van den Hengel's participation in this research
  was supported under the Australian Research Council's Discovery Projects funding scheme 
  (project number DP0988439).
  E-mail: anton.vandenhengel@adelaide.edu.au.
 }
}

\markboth{Shen et al.: Optimally Training a Cascade Classifier}{}

\IEEEcompsoctitleabstractindextext{%

\begin{abstract}

         Cascade classifiers are widely used in real-time object detection.
         Different from conventional classifiers that are designed for a low overall classification
         error rate,     a classifier in each node of the cascade is required to achieve an
         extremely high detection rate and moderate false positive rate.  Although there are a few
         reported methods addressing  this requirement in the context of object detection, there is
         no a principled feature selection method that explicitly takes into account this asymmetric
         node learning objective.  We provide such an algorithm here.  We show a special case of the
         biased minimax probability machine has the same formulation as the linear asymmetric
         classifier (LAC) of  \cite{wu2005linear}. We then  design a new boosting algorithm that
         directly optimizes the cost function of LAC. The resulting totally-corrective boosting
         algorithm is implemented by the column generation technique in convex optimization.
         Experimental results on object detection verify the effectiveness of the proposed boosting
         algorithm as a node classifier in cascade object detection, and show performance better
         than that of the current state-of-the-art.

       %

\end{abstract}

\begin{IEEEkeywords}
        AdaBoost, minimax probability machine,
        cascade classifier, object detection.
\end{IEEEkeywords}}

\maketitle

\thispagestyle{empty}

\section{Introduction}
\label{sec:intro}

        \IEEEPARstart{R}{eal-time} object detection inherently involves searching a large number of candidate image
        regions for a small number of objects.  Processing a single image, for example, can require
        the interrogation of well over a million scanned windows in order to uncover a single
        correct detection.  This imbalance in the data has an impact on the way that detectors are
        applied, but also on the training process.  This impact is reflected in the need to identify
        discriminative features from within a large over-complete feature set.

        Cascade classifiers have been proposed as a potential solution to the problem of imbalance
        in the data \cite{wu2008fast,brubaker2008,bi2006Comp,dundar2007,viola2004robust}, and have
        received significant attention due to their speed and accuracy. In this work, we propose a
        principled method by which to train a {\em boosting}-based cascade of classifiers.

        The boosting-based cascade approach to object detection was introduced by Viola and
        Jones \cite{viola2004robust,Viola2002Fast}, and has received significant subsequent
        attention
        \cite{pham07,pham08multi,Li2004Float,Paisitkriangkrai2009CVPR,shen2008face,paisitkriangkrai2008fast}.
        It also
        underpins the current state-of-the-art \cite{wu2005linear,wu2008fast}.

        The Viola and Jones approach uses a cascade of increasingly complex classifiers, each of
        which aims to achieve the best possible classification accuracy while achieving an extremely
        low false negative rate.  These classifiers can be seen as forming the nodes of a degenerate
        binary tree (see Fig.~\ref{fig:1A}) whereby a negative result from any single such node
        classifier terminates the interrogation of the current patch.  Viola and Jones use AdaBoost
        to train each node classifier in order to achieve the best possible classification accuracy.
        A low false negative rate is achieved by subsequently adjusting the decision threshold until
        the desired false negative rate is achieved.  This process cannot be guaranteed to produce
        the best classification accuracy for a given false negative rate.

        Under the assumption that each node of the cascade classifier makes independent
        classification errors, the detection rate and false positive rate of the entire cascade are:
        $ F_{\rm dr} = \prod_{ t =1}^N d_t    $  and $ F_{\rm fp} = \prod_{ t =1}^N f_t    $,
        respectively, where $d_t$ represents the detection rate of classifier $t$, $f_t$ the
        corresponding false positive rate and $N$ the number of nodes.  As pointed out in
        \cite{viola2004robust,wu2005linear}, these two equations suggest a {\em node learning
        objective}: Each node should have an extremely high detection rate $d_t $ ({\em e.g.},
        $99.7\%$) and a moderate false positive rate $ f_t $ ({\em e.g.}, $50\%$).  With the above
        values of $  d_t $ and $ f_t $, and a cascade of $ N = 20 $ nodes, then $ F_{\rm dr} \approx
        94\%$ and $ F_{\rm fp} \approx  10^ {-6} $, which is a typical design goal.

        One drawback of the standard AdaBoost approach to boosting 
        is that it does not take advantage of the cascade classifier's special structure. 
        AdaBoost only minimizes the overall classification error and does not minimize the number of
        false negatives.  In this sense, the features selected are not optimal for the purpose of
        rejecting as many negative examples as possible.
        Viola and Jones proposed a solution to this problem in AsymBoost \cite{Viola2002Fast}
        (and its variants \cite{pham07,pham08multi,wang2010asym,masnadi2007asym}) by
        modifying the exponential loss function so as to more greatly penalize false negatives.
        AsymBoost achieves better detection rates than AdaBoost, but still addresses the node
        learning goal \emph{indirectly}, and cannot be guaranteed to achieve the optimal solution.

        Wu {\em et al.} explicitly studied the node learning goal and proposed 
        to use linear asymmetric classifier (LAC) and Fisher linear discriminant analysis (LDA)
        to adjust the weights on a set of features selected by AdaBoost or AsymBoost 
        \cite{wu2005linear,wu2008fast}.
        Their experiments indicated that with this post-processing technique 
        the node learning objective can be better met, which is translated into improved 
        detection rates. 
        In Viola and Jones' framework, boosting is used to select features and at the same time to
        train a strong classifier. Wu {\em et al.}'s work separates these two tasks:
        AdaBoost or AsymBoost is used to select features; and as a second step, LAC or LDA is used to
        construct a strong classifier by adjusting the weights of the selected features. 
        The node learning objective is only considered at the 
        second step. At the first step---feature selection---the node learning objective is
        not explicitly considered at all. We conjecture  that 
        {\em further improvement may be gained
        if the node learning objective is explicitly 
        taken into account at both steps}. 
        We thus propose new  boosting algorithms to implement this idea and verify this conjecture.
        A preliminary version of this work was published in
        Shen {\em et al.} \cite{shen2010eccv}. 

        Our major contributions are as follows.  
        \begin{enumerate}
           \item
                Starting from the theory of minimax probability machines (MPMs),
                we derive a simplified 
                version of the biased minimax probability machine, which has the same formulation as 
                the linear asymmetric classifier of \cite{wu2005linear}. 
                We thus show the underlying connection between MPM and LAC. 
                Importantly, this new interpretation weakens some of the restrictions 
                on the acceptable input data distribution imposed by LAC. 
           \item
                We develop new boosting-like algorithms by directly
                minimizing the objective function of the
                linear asymmetric classifier, which results in an algorithm that we label LACBoost. 
                We also propose FisherBoost on the basis of Fished LDA rather than LAC.  Both
                methods may be used to identify the feature set that optimally achieves the node
                learning goal when training a cascade classifier.  To our knowledge, this is the
                first attempt to design such a feature selection method. 
           \item

                LACBoost and FisherBoost share similarities with LPBoost 
                \cite{Demiriz2002LPBoost} in the sense that both use
                column generation---a technique originally proposed
                for large-scale linear programming (LP). 
                Typically, the Lagrange dual problem 
                is solved at each iteration in column generation. We instead solve
                the primal quadratic programming (QP) problem, which has a special structure
                and entropic gradient (EG)
                can be used to solve the problem very efficiently. 
                Compared with general interior-point based QP solvers, EG is much faster. 
           \item
                 We apply LACBoost and FisherBoost to object detection and better performances are
                 observed over the state-of-the-art methods \cite{wu2005linear,wu2008fast,maji2008}.
                 The results confirm our conjecture and show the effectiveness of LACBoost and
                 FisherBoost.  These methods can be immediately applied to other asymmetric
                 classification problems.
         \end{enumerate}

                Moreover, we analyze the condition that makes the validity of LAC,
                and show that the multi-exit cascade might be more suitable
                for applying LAC learning of \cite{wu2005linear,wu2008fast} (and our LACBoost)
                rather than Viola-Jones standard cascade.
         
                As observed in Wu {\em et al.} \cite{wu2008fast}, in many cases, LDA even performs
                better than LAC. In our experiments, we have also observed similar phenomena. 
                Paisitkriangkrai {\em et al.} \cite{Paisitkriangkrai2009CVPR} empirically showed
                that LDA's criterion can be used to achieve better detection results. 
                An explanation of why LDA works so well for object detection
                is missing in the literature. 
                Here we  demonstrate that in the context of
                object detection, LDA can be seen as a regularized version of LAC in approximation.

         The proposed LACBoost/FisherBoost algorithm differs from traditional boosting
         algorithms in that it does not minimize a loss function.
         This opens new possibilities for designing new boosting algorithms for special purposes. 
         We have also extended column generation for optimizing nonlinear 
         optimization problems.

         \begin{figure}[t]
             \begin{center}
                 \includegraphics[width=0.45\textwidth]{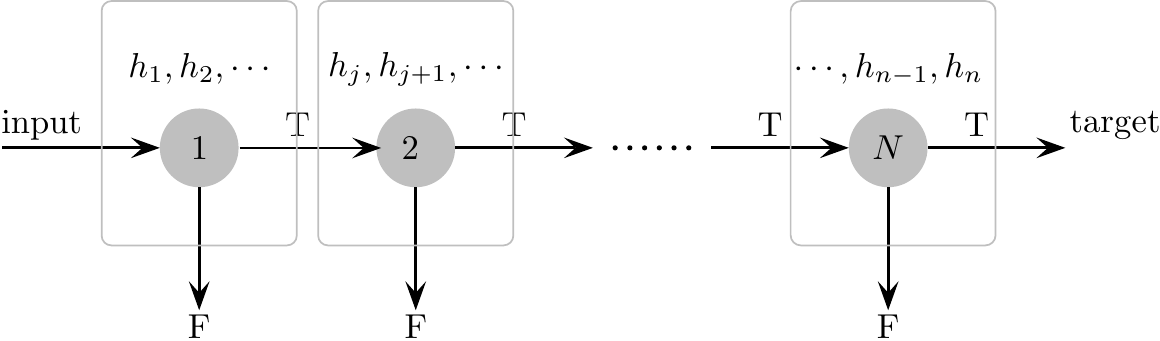}
                 \includegraphics[width=0.45\textwidth]{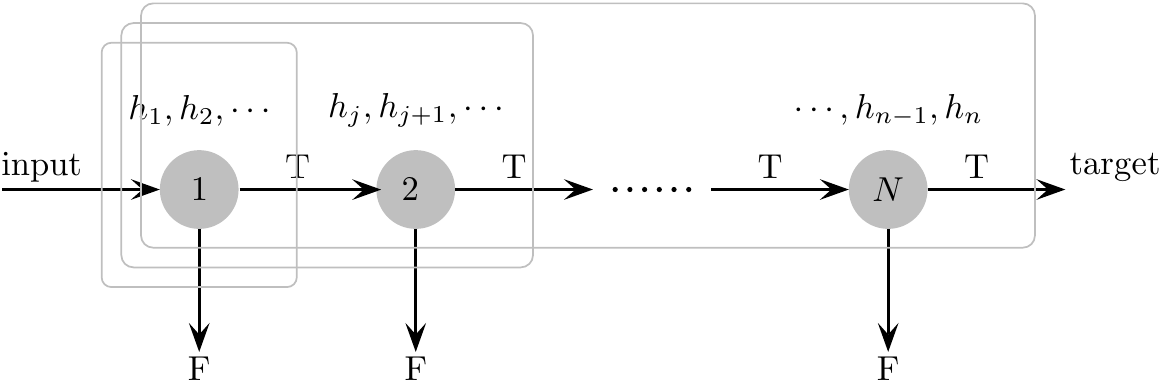}
             \end{center}
             \caption{Cascade classifiers. 
             The first one is the standard cascade of Viola and Jones \cite{viola2004robust}. 
             The second one is the multi-exit cascade proposed in \cite{pham08multi}. 
             Only those classified as true detection by all nodes will be true targets.}
             \label{fig:1A}
         \end{figure}

        \subsection{Related Work}
        
        %
        %
        %
        The three components  making up the Viola and Jones' detection approach are:
        \begin{enumerate}
        \item
              The cascade classifier, which efficiently filters out negative patches
              in early nodes while maintaining a very high detection rate;
        \item
            AdaBoost that selects informative features and at the same time trains 
            a strong classifier;
        \item
            The use of integral images, which makes the computation of Haar features 
            extremely fast.
        \end{enumerate}
        This approach has received significant subsequent attention. 
        A number of alternative cascades have been developed including the
        soft cascade \cite{Bourdev05SoftCascade}, the dynamic cascade 
        \cite{Rong2007}, and the multi-exit cascade \cite{pham08multi}. 
       %
        In this work we have adopted the multi-exit cascade that aims to improve classification
        performance by using the results of all of the weak classifiers applied to a patch so far in
        reaching a decision at each node of the tree (see Fig.~\ref{fig:1A}).
        Thus the  $ n $-th node classifier uses the results of the weak classifiers associated with
        node $n$, but also those associated with the previous $n-1$ node classifiers in the cascade.
        We show below that LAC post-processing can enhance the multi-exit cascade, and that
        the multi-exit cascade more accurately fulfills the LAC requirement that the margin
        be drawn from a Gaussian distribution.

        There have also been a number of improvements
        suggested to the Viola and Jones approach to
        the learning algorithm for constructing a classifier. 
        Wu {\em et al.}, for example, use fast forward feature selection to 
        accelerate the training procedure
        \cite{wu2003rare}. 
        Wu {\em et al.} \cite{wu2005linear} also showed that LAC may be used to deliver 
        better classification performance.         
        Pham and Cham recently proposed online asymmetric boosting 
        that considerably reduces the training time required \cite{pham07}.
        By exploiting the feature
        statistics, Pham and Cham have also designed a fast method to train weak classifiers
        \cite{Pham2007Fast}. 
        Li {\em et al.} proposed FloatBoost, which  discards
        redundant weak classifiers during AdaBoost's 
        greedy selection procedure \cite{Li2004Float}. 
        Liu and Shum also proposed KLBoost, aiming to select features that maximize the
        projected Kullback-Leibler divergence and select feature weights by minimizing the
        classification error \cite{Liu2003KL}.
        Promising results have also been reported by
        LogitBoost \cite{Tuzel2008PAMI} that employs the logistic regression 
        loss, and GentleBoost \cite{Torralba2007} that uses adaptive 
        Newton steps to fit the additive model.
        Multi-instance boosting has been introduced to object detection 
        \cite{viola2005mil,dollar08mcl,lin2009mil},
        which does not require exactly labeled locations of the targets in 
        training data.

        New features have also been designed for improving the detection
        performance. Viola and Jones' 
        Haar features are not sufficiently discriminative for detecting
        more complex objects like pedestrians, or multi-view faces. 
        Covariance features \cite{Tuzel2008PAMI} and histogram of oriented gradients (HOG)
        \cite{Dalal2005HOG} have been proposed in this context,
        and efficient implementation approaches (along the lines of integral images) are 
        developed for each. Shape context, which can also exploit integral images 
        \cite{aldavert2010integral}, was applied to human detection in thermal images
        \cite{wang2010thermal}.        
        The local binary pattern (LBP) descriptor and its variants have  been shown
        promising performance on human detection \cite{mu2008lbp,zheng2010cslbp}. 
        Recently, effort has been spent on combining complementary features, including:
        simple concatenation of HOG and LBP  \cite{wang2009hoglbp}, combination of 
        heterogeneous local features in a boosted cascade 
        classifier \cite{wu2008tradeoff},
        and Bayesian integration of intensity, depth and motion features 
        in a mixture-of-experts model \cite{enzweiler2010multi}.

        The paper is organized as follows. We briefly review the concept of
        minimax probability machine and derive the new simplified version of
        biased minimax probability machine in Section \ref{sec:mpm}. 
        Linear asymmetric classification and its connection 
        to the minimax probability machine is discussed in Section \ref{sec:LAC}. 
        In Section \ref{sec:LACBoost}, 
        we show how to design new boosting algorithms (LACBoost and FisherBoost)
        by rewriting the optimization
        formulations of LAC and Fisher LDA. 
        The new boosting algorithms are applied to object detection
        in Section \ref{sec:exp} and we 
        conclude the paper in Section \ref{sec:con}.

        \subsection{Notation}

        The following notation is used. 
        A matrix is denoted by a bold upper-case
        letter ($\X$); a column vector is denoted by a bold lower-case
        letter ($ \x $). 
        The $ i$th row of $\X $ is denoted by $ \X_{i:} $ 
        and the $ i $th column $ \X_{:i}$.
        The identity matrix is $ \bf I $ and its size should be clear
        from the context. $ \bf 1 $ and  
        $ \bf 0 $ are column vectors of $ 1$'s and $ 0$'s,
        respectively.  
        We use $ \psd, \nsd $ to denote component-wise inequalities.

        Let $ {\cal T} = \{ (\x_i, y_i  ) \}_{i = 1, \cdots, m}$ be the set of
        training data, where $ \x_i \in \cX$ and $ y_i \in \{-1,+1\}
        $, $ \forall i$. 
        The training set consists of $ m_1 $ positive training points
        and $ m_2 $ negative ones; $ m_1 + m_2 = m $. 
        Let $ h ( \cdot  ) \in \cH $ be a weak
        classifier that projects an input vector $ \x $ into 
        $\{-1, +1 \}$. 
        Note that here we consider only classifiers with discrete outputs
        although the developed methods can be applied to real-valued 
        weak classifiers too. 
        We assume that $ \cH $,
        the set from which $ h ( \cdot  ) $ is selected, is finite and has
        $n$ elements.
        %
        %

        Define the matrix $ \H^{\cal Z} \in \Real^{ m \times n }$ such that the  $ (i,j)$ entry $
        \H^{\cal Z}_{ij} =  h_j ( \x_i ) $ is the label predicted by weak classifier $ h_j(\cdot) $ for
        the datum $ \x_i $, where $\x_i$ the $i$th element of the set ${\cal Z}$. 
        In order to simplify
        the notation we eliminate the superscript when ${\cal Z}$ is the training set,
        so  $\H^{\cal Z} = \H$.
        Therefore, each
        column $ \H_{ :j }  $ of the matrix $ \H $ consists of the
        output of weak classifier $ h_j(\cdot) $ on all the training
        data; while each row $ \H_{ i: } $ contains  the outputs of all
        weak classifiers on the training datum $ \x_i $.
        Define similarly the matrix $ \A \in  \Real^{ m \times n }$ such that 
        $ \A_{ij} = y_i h_j (  \x_i ) $.
%
        Note that boosting algorithms entirely depends on the matrix $  \A  $ and
        do not directly interact with the training examples. 
        Our following discussion will thus largely focus on the matrix $\A$.
        We write the vector obtained by multiplying a matrix $  \A  $ 
        with a vector 
        $ \w $ as $ \A \w $ and its $i$th entry as $ (\A \w)_i$.
        If we let $\w$ represent the coefficients of a selected weak
        classifier then the margin of the training datum $ \x_i $ is
        $ \rho_i = \A_{ i :} \w = (\A \w)_i $ 
        and the vector of such margins for all of the training data is $\brho = \A \w$.

\section{Minimax Probability Machines}
\label{sec:mpm}

    Before we introduce our boosting algorithm, let us briefly 
    review the concept of minimax probability machines (MPM) 
    \cite{lanckriet2002mpm} first.

\subsection{Minimax Probability Classifiers}

   %
    Let $ \x_1  \in \Real^n $ and  $ \x_2 \in \Real^n  $
    denote two random vectors
    drawn from two distributions with means and covariances
    $ ( \bmu_1, \bSigma_1 ) $
    and $ (  \bmu_2, \bSigma_2  ) $, respectively. 
    Here $ \bmu_1, \bmu_2 \in \Real^n  $ and 
    $ \bSigma_1, \bSigma_2 \in \Real^{ n \times n } $. 
    We define the class labels of $ \x_1 $ and
    $ \x_2 $  as $ +1 $ and $ -1 $, w.l.o.g.  
    The minimax probability machine (MPM)
    seeks a robust separation hyperplane that can separate the two classes of data with
    the maximal probability. The hyperplane can be expressed as $ \w ^\T \x = b  $ with 
    $ \w \in \Real^n \backslash \{\bf 0 \} $ and $ b \in \Real $. 
    The problem of identifying the optimal hyperplane may then
    be formulated as 
    \begin{align}
        \label{eq:1}
        \max_{\w,b,\gamma} \,\, \gamma \,\,\,\, \st \,   
        & \left[  \inf_{ \x_1 \sim ( \bmu_1, \bSigma_1 )  } \Pr \{ \w^\T \x_1 \geq b \}  \right]
                  \geq \gamma,
        \\
        & \left[  \inf_{ \x_2 \sim ( \bmu_2, \bSigma_2 )  } \Pr \{ \w^\T \x_2 \leq b \}  \right]
                  \geq \gamma.    \notag       
    \end{align}
    Here $ \gamma $ is the lower bound of the classification accuracy
    (or the worst-case accuracy) on test data.
    This problem can be transformed into a convex problem,
    more specifically a second-order cone
    program (SOCP) \cite{boyd2004convex} and thus can be solved efficiently
    \cite{lanckriet2002mpm}.

    Before we present our results, we introduce an important proposition from 
    \cite{yu2009general}. Note that we have used different notation.
    \begin{proposition}
        For a few different distribution families, the worst-case constraint 
        \begin{equation}
            \label{eq:3}
             \left[  \inf_{ \x\sim ( \bmu, \bSigma )  } \Pr \{ \w^\T \x\leq b \}  \right]
                  \geq \gamma,
        \end{equation}
        can be written as:
        \begin{enumerate}
        \item
            if $ \x\sim (\bmu, \bSigma ) $, {\em i.e.}, 
            $ \x$ follows an arbitrary distribution with 
        mean $ \bmu $ and covariance $ \bSigma $, then
        \begin{equation}
            \label{eq:5A}
                    b \geq 
                        \w ^\T \bmu + \sqrt{ \tfrac{ \gamma }{ 1 - \gamma } }
                        \cdot 
                        \sqrt{  \w^\T \bSigma  \w }; 
        \end{equation}
        \item
            if $ \x\sim (\bmu, \bSigma )_{\rm S},$\footnote{Here
            $(\bmu, \bSigma )_{\rm S}$ denotes
            the family of distributions in $ ( \bmu, \bSigma  ) $ that are also symmetric 
            about the mean $ \bmu $. 
            $(\bmu, \bSigma )_{\rm SU}$ denotes
            the family of distributions in $ ( \bmu, \bSigma  ) $ that are additionally symmetric 
            and linear unimodal about  $ \bmu $.
            }
            then we have
        \begin{equation}
            \label{eq:5B}
            \begin{cases}
                    b \geq 
                        \w ^\T \bmu + \sqrt{ \tfrac{ 1 }{ 2 (1 - \gamma) } }
                        \cdot 
                        \sqrt{  \w^\T \bSigma  \w }, 
                             & \text{if~} \gamma \in (0.5,1);  
                    \\
                        b \geq 
                        \w ^\T \bmu, 
                             & \text{if~} \gamma \in (0,0.5];
            \end{cases}
        \end{equation}
    \item
        if
        $ \x\sim (\bmu, \bSigma )_{\rm SU} $,
%
        then 
        \begin{equation}
            \label{eq:5C}
            \begin{cases}
                    b \geq 
                        \w ^\T \bmu + \frac{2}{3}  \sqrt{ \tfrac{ 1 }{ 2 (1 - \gamma) } }
                        \cdot 
                        \sqrt{  \w^\T \bSigma  \w }, 
                             & \text{if~} \gamma \in (0.5,1);  
                    \\
                        b \geq 
                        \w ^\T \bmu, 
                             & \text{if~} \gamma \in (0,0.5];
            \end{cases}
        \end{equation}
    \item
        if $ \x$ follows a Gaussian distribution with
        mean $ \bmu $ and covariance  $ \bSigma $, \ie, $ \x\sim {\cal G}( \bmu, \bSigma ) $,
        then
        \begin{equation}
            \label{eq:5D}
                    b \geq 
                    \w ^\T \bmu + \Phi^{-1} ( \gamma ) 
                    \cdot   \sqrt{  \w^\T \bSigma  \w }, 
        \end{equation}
        where $ \Phi(\cdot)$ is the cumulative distribution function (c.d.f.) of the
        standard normal distribution $ {\cal G} (0,1)$, and $ \Phi ^{-1} (\cdot)$
        is the inverse function of $ \Phi(\cdot)$. 
        
        Two useful observations about $ \Phi ^{-1} (\cdot)$ are: 
        $ \Phi ^{-1} ( 0.5) = 0 $; 
        and $ \Phi ^{-1} ( \cdot ) $ is a monotonically increasing 
        function in its domain.  
\end{enumerate}
\label{prop:1}
    \end{proposition}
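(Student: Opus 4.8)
The plan is to collapse each multivariate worst-case constraint to a one-dimensional tail bound and then invoke the sharp Chebyshev-type inequality appropriate to the relevant distribution family. Introduce the scalar $ z = \w^\T \x $. Whatever law $ \x $ follows within the family $(\bmu,\bSigma)$ (respectively its symmetric, symmetric-unimodal, or Gaussian sub-family), $ z $ has mean $ \mu_z = \w^\T\bmu $ and variance $ \sigma_z^2 = \w^\T\bSigma\w $, and $ z $ inherits the structural property at issue: symmetry about its mean and Gaussianity are obviously preserved by the linear map $ \x \mapsto \w^\T\x $, and the ``linear unimodal'' notion in the footnote is exactly designed so that one-dimensional projections stay unimodal. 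The constraint \eqref{eq:3} then reads $ \inf \Pr\{ z \le b \} \ge \gamma $, where the infimum is over scalar distributions in the matching family with the fixed first two moments $ (\mu_z, \sigma_z^2) $.

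First I would establish that the reduction is exact, i.e. that the multivariate worst case equals the scalar worst case over laws with matching moments. One inclusion is immediate: every admissible $ \x $ yields an admissible scalar $ z $, so $ \inf_\x \Pr\{\w^\T\x \le b\} $ is at least the scalar worst case. For the reverse I would lift a univariate extremal distribution back to $ \Real^n $, placing its support along the line $ \bmu + t\,\bSigma\w $ and, if required, padding the remaining directions so as to match the full covariance $ \bSigma $ while keeping the projection onto $ \w $ unchanged; this realizes the scalar worst case as $ \w^\T\x $ for an admissible $ \x $. This is the Marshall--Olkin / Lanckriet-type argument, and it is where most of the care is needed.

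With the reduction in hand, each case is a classical scalar inequality applied with $ \lambda = b - \mu_z \ge 0 $. For \eqref{eq:5A}, Cantelli's one-sided Chebyshev inequality gives $ \Pr\{z \le b\} \ge \lambda^2/(\sigma_z^2 + \lambda^2) $, sharp on a two-point distribution; requiring this to be $ \ge \gamma $ rearranges to $ \lambda \ge \sqrt{\gamma/(1-\gamma)}\,\sigma_z $. For \eqref{eq:5D}, $ z \sim \mathcal{G}(\mu_z,\sigma_z^2) $ exactly, so $ \Pr\{z \le b\} = \Phi((b-\mu_z)/\sigma_z) $, and monotonicity of $ \Phi $ turns the constraint into $ b - \mu_z \ge \Phi^{-1}(\gamma)\,\sigma_z $. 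For \eqref{eq:5B} and \eqref{eq:5C} I would use the symmetric refinement $ \Pr\{z - \mu_z \ge \lambda\} = \tfrac{1}{2}\Pr\{|z-\mu_z| \ge \lambda\} \le \sigma_z^2/(2\lambda^2) $, sharp on a symmetric three-point law, sharpened under unimodality by the Vysochanskij--Petunin bound to $ 2\sigma_z^2/(9\lambda^2) $. Writing $ \Pr\{z \le b\} \ge 1 - \Pr\{z-\mu_z > \lambda\} \ge \gamma $ yields $ \lambda \ge \sqrt{1/(2(1-\gamma))}\,\sigma_z $ and $ \lambda \ge \tfrac{2}{3}\sqrt{1/(2(1-\gamma))}\,\sigma_z $ respectively, using $ 2/(9(1-\gamma)) = (2/3)^2/(2(1-\gamma)) $; these are the $ \gamma > 0.5 $ branches. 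For $ \gamma \le 0.5 $, symmetry alone gives $ \Pr\{z \le \mu_z\} \ge \tfrac{1}{2} \ge \gamma $, so $ b \ge \w^\T\bmu $ suffices, matching the lower branches.

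The main obstacle I anticipate is not the scalar inequalities, which are standard, but (i) verifying the worst-case reduction is exact for each family, namely that the extremal scalar law can be realized as a linear image of an admissible $ \x $ while still matching the full covariance $ \bSigma $, and (ii) confirming that each scalar bound is genuinely attained within the prescribed family (arbitrary, symmetric, symmetric-unimodal), so that the infimum equals the stated closed form rather than merely being bounded by it. Once tightness is secured in each family, the equivalence of \eqref{eq:3} with \eqref{eq:5A}--\eqref{eq:5D} follows from the monotone rearrangements above.
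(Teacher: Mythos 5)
The paper does not actually prove Proposition~\ref{prop:1}: it states ``We omit the proof here and refer the reader to \cite{yu2009general} for details.'' So there is no in-paper argument to compare against; what you have written is a reconstruction of the argument in the cited source, and it is essentially the right one. The two-step structure --- (a) an exact reduction of the multivariate worst case to the scalar worst case over laws with matching first two moments (this is precisely the ``general projection property'' that Yu \emph{et al.} establish, and you correctly identify it as the step needing the Marshall--Olkin-style lifting construction), followed by (b) the sharp one-sided moment inequality for each family --- is the intended proof. Your algebra checks out for cases 1, 2 and 4: Cantelli gives $\lambda \ge \sqrt{\gamma/(1-\gamma)}\,\sigma_z$ and is attained by a two-point law; the symmetric case gives the tight worst-case tail $\min\{\tfrac12,\ \sigma_z^2/(2\lambda^2)\}$, whose two branches produce exactly the two branches of \eqref{eq:5B}; and the Gaussian case is immediate.

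The one place where your argument has a genuine gap is case 3. The Gauss/Vysochanskij--Petunin bound $\Pr\{|z-\mu_z|\ge\lambda\}\le 4\sigma_z^2/(9\lambda^2)$ is only the \emph{tight} worst case for $\lambda$ above a threshold (for a symmetric unimodal law, $\lambda \ge 2\sigma_z/\sqrt{3}$); below that threshold the extremal tail is the linear branch $\tfrac12\bigl(1-\lambda/(\sqrt{3}\sigma_z)\bigr)$, realized by a mixture of uniforms. Translating, the coefficient $\tfrac{2}{3}\sqrt{1/(2(1-\gamma))}$ in \eqref{eq:5C} is the exact worst-case requirement only for $\gamma \ge 5/6$; for $\gamma\in(0.5,5/6)$ the tight requirement is the smaller $\lambda \ge \sqrt{3}(2\gamma-1)\sigma_z$. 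Since $2\sigma_z^2/(9\lambda^2)$ remains a valid (merely non-tight) upper bound on the tail for all $\lambda>0$, your derivation still proves that \eqref{eq:5C} is \emph{sufficient} for \eqref{eq:3}, but your claim that the bound is ``genuinely attained'' --- and hence that the constraint ``can be written as'' \eqref{eq:5C} in the sense of equivalence --- fails in that sub-regime. This caveat is arguably inherited from how the proposition is transcribed, but a complete proof should either restrict the stated range of $\gamma$ for case 3 or carry the piecewise form of the Gauss inequality through the calculation.
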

    We omit the proof here and refer the reader to \cite{yu2009general} for details.

\subsection{Biased Minimax Probability Machines}

    The formulation \eqref{eq:1} assumes that the classification problem is balanced.
    It attempts to achieve a high recognition accuracy, which assumes that
    the losses associated with all mis-classifications are identical. 
    However, in many applications this is not the case.

    Huang \etal \cite{huang2004mpm} proposed a biased version of MPM
    through a slight modification of \eqref{eq:1}, which
    may be formulated as 
    \begin{align}
        \label{eq:2}
        \max_{\w,b,\gamma} \,\, \gamma \,\,\,\, \st \,   
        & \left[  \inf_{ \x_1 \sim ( \bmu_1, \Sigma_1 )  } \Pr \{ \w^\T \x_1 \geq b \}  \right]
                  \geq \gamma,
        \\
         & \left[  \inf_{ \x_2 \sim ( \bmu_2, \Sigma_2 )  } \Pr \{ \w^\T \x_2 \leq b \}  \right]
                  \geq \gamma_\circ.    \notag       
    \end{align}
    Here $ \gamma_\circ \in (0,1) $ is a prescribed constant,
    which is the acceptable classification accuracy for the less
    important class.
    The resulting decision hyperplane 
    prioritizes the classification of
    the important class $ \x_1 $ 
    over that of the less important class $ \x_2 $.
    Biased MPM is thus expected to perform better in 
    biased classification applications.

    Huang \etal showed that \eqref{eq:2} can be iteratively
    solved via solving a sequence of SOCPs using 
    the fractional programming (FP) technique.
    Clearly it is
    significantly more computationally demanding
    to solve \eqref{eq:2} than \eqref{eq:1}.

    In this paper we are interested in the special 
    case of $ \gamma_\circ = 0.5 $ due to its important
    application in cascade object detection
    \cite{viola2004robust,wu2005linear}.
    In the following discussion, for simplicity,
    we only consider $ \gamma_\circ = 0.5 $ although some algorithms
    developed may also apply to $  \gamma_\circ < 0.5 $.

    Next we show how to re-formulate \eqref{eq:2} into a simpler quadratic program (QP) 
    based on the recent  theoretical results in \cite{yu2009general}.

    \subsection{Simplified Biased Minimax Probability Machines}

   %
    Equation \eqref{eq:5A} represents the most general  of the four cases presented in
    equations~\eqref{eq:5A} through~\eqref{eq:5D}, and is used in MPM \cite{lanckriet2002mpm}
    and the biased MPM \cite{huang2004mpm} because it does not impose constraints upon the
    distributions of $\x_1$ and $\x_2$.
    On the other hand, 
    one may take advantage of prior knowledge whenever available. For example,
    it is shown in \cite{wu2005linear} that in face detection, 
    the weak classifier outputs 
    can be well approximated by Gaussian distributions. 
    Equation \eqref{eq:5A} does not utilize any this type of {\it a priori}
    information, and hence, for many problems, \eqref{eq:5A} is too
    {\em conservative}.

    Let us consider the special case of $ \gamma = 0.5 $. 
    It is easy to see that the worst-case constraint \eqref{eq:3}
    becomes a simple linear constraint for symmetric, symmetric unimodal,
    as well as Gaussian distributions.  
    As pointed in \cite{yu2009general}, such a result is the immediate 
    consequence of symmetry because the worst-case distributions are forced to
    put probability mass arbitrarily far away on both sides of the mean. 
    In such a case any information about the covariance is neglected.

    We now apply this result into  biased MPM as represented by \eqref{eq:2}.
    Our main result is the following theorem. 
    \begin{theorem}
        With $ \gamma_\circ = 0.5 $, the biased minimax problem \eqref{eq:2} 
        can be formulated as an unconstrained problem \eqref{eq:11d}
        under the assumption
        that $ \x_2 $ follows a symmetric distribution.
        The worst-case classification accuracy for the first class, 
        $ \gamma^\star$, is obtained by solving
        \begin{equation}
            \label{eq:opt}
            \varphi( \gamma^\star )
            = \frac{ -b^\star + {a^\star } ^ \T \bmu_1 }
                   { 
                     \sqrt{ {\w^\star} ^\T \Sigma_1 \w^\star }
                   },
        \end{equation}
        where  $ \varphi(\cdot) $ is defined in \eqref{eq:6}; 
        $\{ \w^\star , b^\star \}$ is the optimal solution of \eqref{EQ:11c}
        and \eqref{eq:11d}. 
    \label{thm:2}
    \end{theorem}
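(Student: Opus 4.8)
The plan is to push each of the two probabilistic constraints in \eqref{eq:2} through Proposition~\ref{prop:1}, thereby turning them into deterministic inequalities, then eliminate the threshold $b$ and the accuracy variable $\gamma$, and finally recognize what survives as a scale-invariant ratio maximization in $\w$ alone. The symmetry hypothesis on $\x_2$ enters only to linearize the second constraint.

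First I would dispose of the less-important class. Since $\gamma_\circ = 0.5$ falls in the regime $(0,0.5]$ and $\x_2$ is assumed symmetric, the second branch of \eqref{eq:5B} applies, so the constraint $\inf_{\x_2}\Pr\{\w^\T\x_2 \leq b\} \geq 0.5$ collapses to the purely linear inequality $b \geq \w^\T\bmu_2$; all covariance information about $\x_2$ is discarded, exactly the symmetry-induced degeneracy noted in the text just after \eqref{eq:5D}. For the important class I would first rewrite $\Pr\{\w^\T\x_1 \geq b\} = \Pr\{(-\w)^\T\x_1 \leq -b\}$ and then apply the relevant case of Proposition~\ref{prop:1} with $\w \mapsto -\w,\, b\mapsto -b$, using $\sqrt{(-\w)^\T\Sigma_1(-\w)} = \sqrt{\w^\T\Sigma_1\w}$. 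This yields $\w^\T\bmu_1 - b \geq \varphi(\gamma)\sqrt{\w^\T\Sigma_1\w}$, where $\varphi(\cdot)$ is the monotonically increasing coefficient function recorded in \eqref{eq:6}.

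Next I would eliminate $b$ and $\gamma$. Because $\varphi$ is increasing, maximizing $\gamma$ is equivalent to maximizing $\varphi(\gamma)$; and since $b$ enters only through $\w^\T\bmu_1 - b$, which is decreasing in $b$, the less-important constraint must be active at the optimum, giving $b^\star = \w^\T\bmu_2$. Substituting this value turns the first-class inequality into $\varphi(\gamma) \leq \w^\T(\bmu_1 - \bmu_2)\big/\sqrt{\w^\T\Sigma_1\w}$, so the problem reduces to maximizing this ratio over $\w \in \Real^n\backslash\{\mathbf 0\}$. The ratio is invariant under positive rescaling of $\w$, so it normalizes into the constrained form \eqref{EQ:11c} and, after clearing the normalization, into the unconstrained objective \eqref{eq:11d}; its maximizer is the familiar LAC direction $\w^\star \propto \Sigma_1^{-1}(\bmu_1-\bmu_2)$. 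Reading off the first-class constraint, which is tight at $(\w^\star,b^\star)$, then gives $\varphi(\gamma^\star) = (-b^\star + {\w^\star}^\T\bmu_1)\big/\sqrt{{\w^\star}^\T\Sigma_1\w^\star}$, i.e.\ \eqref{eq:opt}, and $\gamma^\star$ is recovered by inverting $\varphi$.

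The main obstacle is the elimination of $b$: one must argue that the less-important-class constraint is \emph{active} at optimality rather than merely feasible, and simultaneously verify that the surviving ratio is genuinely scale-free so that the passage from the constrained \eqref{EQ:11c} to the unconstrained \eqref{eq:11d} discards no optimal solutions. It is precisely the symmetry of $\x_2$ that makes this clean: without it the second constraint would retain a term $\sqrt{\w^\T\Sigma_2\w}$ coupling $b$ to $\w$ nonlinearly, and the tidy reduction to a single ratio in $\w$ would break down.
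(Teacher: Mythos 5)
Your proposal is correct and follows essentially the same route as the paper's own proof: linearize the $\x_2$ constraint via the $\gamma_\circ=0.5$ symmetric case of Proposition~\ref{prop:1}, apply the sign-flipped case to the $\x_1$ constraint to obtain \eqref{eq:con2}, use monotonicity of $\varphi$ and activeness of both constraints at the optimum to eliminate $\gamma$ and $b$, and arrive at the scale-invariant ratio \eqref{eq:11d} with $\gamma^\star$ recovered from \eqref{eq:opt}. The only cosmetic slip is calling \eqref{EQ:11c} a ``normalized constrained form''---in the paper it is simply the optimality condition $b=\w^\T\bmu_2$---which does not affect the argument.
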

    \begin{proof}
        The second constraint of \eqref{eq:2} is simply 
        \begin{equation}
            b \geq \w^\T \bmu_2.
            \label{eq:con1}
        \end{equation}
        The first constraint of \eqref{eq:2} can be handled by writing 
        $ \w^\T \x_1 \geq b $ as $ -  \w^\T \x_1 \leq - b $ 
        and applying the results 
        in Proposition \ref{prop:1}. It can be written as 
        \begin{equation}
            \label{eq:con2}
            - b + \w^\T \bmu_1 \geq \varphi( \gamma ) 
                     \sqrt{ \w^\T \Sigma_1 \w }, 
        \end{equation}
        with
        \begin{equation}
            \label{eq:6}
            \varphi( \gamma ) = 
            \begin{cases}
                \sqrt{ \frac{ \gamma }{ 1 - \gamma } }           & \text{ ~ if~} 
                            \x_1 \sim ( \bmu_1, \Sigma_1 ),
                \\
                \sqrt{ \frac{1}{ 2 (1 - \gamma) }  }               & \text{ ~ if~} 
                            \x_1 \sim ( \bmu_1, \Sigma_1 )_{ \rm S},
                \\
                \frac{2}{3}\sqrt{ \frac{1}{ 2 (1 - \gamma) }  }    & \text{ ~ if~} 
                            \x_1 \sim ( \bmu_1, \Sigma_1 )_{ \rm SU},
                \\
                \Phi^{-1} ( \gamma )        & \text{ ~ if~} 
                                            \x_1 \sim {\cal G}( \bmu_1, \Sigma_1 ).
            \end{cases}
        \end{equation}
        Let us assume that $ \Sigma_1 $ is strictly positive definite 
        (if it is only positive semidefinite, we can always add a small 
        regularization to its diagonal components).
        From \eqref{eq:con2} we have
        \begin{equation}
        \varphi( \gamma ) \leq \frac{ -b + \w^\T \bmu_1 } { \sqrt{ \w^\T \Sigma_1 \w }}.
        \label{eq:11}
        \end{equation}
        So the optimization problem becomes 
        \begin{equation}
        \max_{\w, b, \gamma } \, \gamma, \,\, 
                \st \,\, \eqref{eq:con1} \text{~and~}
        \eqref{eq:11}.
        \end{equation}

        The maximum value of $\gamma$ (which we label $ \gamma^\star $)
        is achieved when  \eqref{eq:11}
        is strictly an equality. 
        To illustrate this point,
        let  us assume that the maximum is achieved when 
        \[
        \varphi( \gamma^\star ) < \frac{ -b + \w^\T \bmu_1 } { \sqrt{ \w^\T \Sigma_1 \w }}.
        \] 
        Then a new solution can be obtained
        by increasing $  \gamma^\star $ with a positive value such that 
        \eqref{eq:11} becomes an equality. Notice that the constraint \eqref{eq:con1}
        will not be affected,
        and the new solution will be better than the previous one.
        Hence, at the optimum, \eqref{eq:opt} must be fulfilled.

        Because $ \varphi (\gamma)  $ is monotonically increasing 
        {\it for all the four cases}
        in its domain $ (0,1) $ (see Fig.~\ref{fig:1}),
        maximizing $ \gamma $ is equivalent to maximizing 
        $\varphi (\gamma)  $ 
        and this results in 
        \begin{equation}
            \label{eq:11b}
            \max_{\w, b } \,
            \frac{ -b + \w^\T \bmu_1 } 
                { \sqrt{ \w^\T \Sigma_1 \w } }, \,\, 
                \st \,
                b \geq \w^\T \bmu_2.    
        \end{equation}
        As in \cite{lanckriet2002mpm,huang2004mpm}, we also have a scale ambiguity: 
        if $ ( \w^\star, b^\star )$ is a solution,
        $ (  t \w^\star,  t b^\star ) $ with $ t > 0 $ is also 
        a solution.
        
        An important observation is that  the problem \eqref{eq:11b}
        must attain the optimum at 
        \begin{equation}
            \label{EQ:11c}
            b = \w^\T \bmu_2. 
        \end{equation}
        Otherwise if $ b >  \w^\T \bmu_2$, the optimal value
        of \eqref{eq:11b} must be smaller. 
        So we can rewrite \eqref{eq:11b} as an unconstrained problem
        \begin{equation}
            \label{eq:11d}
             \max_{\w  } \,\,
            \frac{ \w^\T ( \bmu_1 - \bmu_2 ) } 
                { \sqrt{ \w^\T \bSigma_1 \w } }.
        \end{equation}

        We have thus shown that, if $ \x_1 $ is distributed
        according to a symmetric, symmetric unimodal, or Gaussian distribution,
        the resulting optimization problem is identical. This is not surprising 
        considering the latter two cases 
        are merely special cases of the symmetric distribution family.

        At optimality, the inequality \eqref{eq:11} becomes an equation, and hence
        $ \gamma^\star$ can be obtained as in \eqref{eq:opt}.
        For ease of exposition, let us denote the fours cases in the right side of 
        \eqref{eq:6} as
        $ \varphi_{\rm gnrl} ( \cdot)$,
        $ \varphi_{\rm S} ( \cdot) $, 
        $ \varphi_{\rm SU} ( \cdot) $, and
        $ \varphi_{\cal G} ( \cdot) $. 
        For $ \gamma \in [0.5, 1) $, as shown in Fig.~\ref{fig:1},
        we have
            $  
                  \varphi_{\rm gnrl} ( \gamma )
               >  \varphi_{\rm S}  ( \gamma ) 
               >  \varphi_{\rm SU} ( \gamma )
               >  \varphi_{\cal G} ( \gamma ) 
               $. 
        Therefore, when solving \eqref{eq:opt} for $ \gamma^\star $, we have 
        $ 
          \gamma^\star_{\rm gnrl}  <
          \gamma^\star_{\rm S}  <      
          \gamma^\star_{\rm SU} <
          \gamma^\star_{\cal G}$.
        That is to say, one can get better accuracy when
        additional information about the data distribution is available, although
        the actual optimization problem to be solved is identical.
    \end{proof}
    
    \begin{figure}[t]
        \begin{center}
            \includegraphics[width=0.85\linewidth]{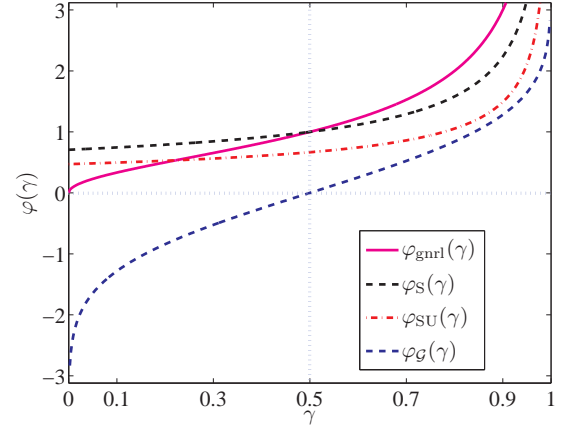}
        \end{center}
        \caption{The function $ \varphi(\cdot) $ in \eqref{eq:6}. The four curves correspond to
        the four cases. They are all monotonically increasing in $(0,1)$.}
        \label{fig:1}
    \end{figure}

        We have derived the biased MPM algorithm from a different perspective. 
        We reveal that only the assumption of symmetric distributions is needed
        to arrive at a simple unconstrained formulation.
        Compared the approach in \cite{huang2004mpm},
        we have used more information to simply the optimization problem. 
        More importantly, as well will show in the next section, this unconstrained
        formulation enables us to design a new boosting algorithm.

        There is a close connection between our algorithm 
        and the linear asymmetric classifier (LAC) in \cite{wu2005linear}.
        The resulting problem \eqref{eq:11d} is exactly the same as 
        LAC in \cite{wu2005linear}. 
        Removing the inequality in this constraint 
        leads to a problem solvable by eigen-decomposition.
        %
        %
        We have thus shown that the results of Wu \etal may be generalized from the Gaussian
        distributions assumed in \cite{wu2005linear} to symmetric distributions.

        %
        %
       %

        It is straightforward to kernelize 
        the linear classifier that we have discussed, following the work 
        of \cite{lanckriet2002mpm, huang2004mpm}.
        Here we are more interested, however, in designing a boosting algorithm that 
        takes the biased learning goal  into 
        consideration when selecting
        features.

\section{Linear Asymmetric Classification}
\label{sec:LAC}

        We have shown that starting from the biased minimax probability
        machine, we are able to obtain the same optimization formulation
        as shown in \cite{wu2005linear}, while much weakening the 
        underlying assumption (symmetric distributions versus Gaussian 
        distributions).
        Before we propose our LACBoost and FisherBoost, however, 
        we provide a brief overview of LAC.

        Wu {\em et al.} \cite{wu2008fast}   
        proposed linear asymmetric classification
        (LAC) as a post-processing step 
        for training nodes in the cascade framework. 
        In \cite{wu2008fast}, it is stated that
        LAC is guaranteed to reach an optimal solution
        under the assumption of Gaussian data distributions.
        We now know that this Gaussianality 
        condition may be relaxed.

        Suppose that we have a linear classifier 
        $ f(\x) = {\bf sign}(\w^\T \x - b)$.
        We seek a  $\{ \w , b \}$ pair with a very
        high accuracy on the positive data $\x_1$ and a moderate accuracy on
        the negative $\x_2$. 
        This can be expressed as the following problem:
    \begin{align}
        \max_{\w \neq {\bf 0}, b} 
        &
        \,  \Pr_{\x_1 \sim ( \bmu_1, \bSigma_1) }
        \{ \w ^\T \x_1 \geq b \}, \,\,
        \notag
        \\
        {\rm s.t.}  & \, \Pr_{\x_2 \sim (\bmu_2,\bSigma_2) }  
        \{ \w^\T \x_2
            \leq b \} = \lambda,
    \label{eq:LAC}
    \end{align}
    In \cite{wu2005linear},
    $\lambda$ is set to $0.5$ and it is assumed that for any $\w$, 
    $\w^\T \x_1$
    is Gaussian and $\w^\T \x_2$ is symmetric,  
    \eqref{eq:LAC} can be approximated by
    \eqref{eq:11d}.
    Again, these assumptions may be relaxed as we have shown in the last section.
    %
    %
    %
    \eqref{eq:11d} is similar to LDA's optimization problem
    \begin{equation}
        \label{EQ:LDA1}
        \max_{\w \neq \bf 0} \;\;
        \frac{  \w^\T ( \bmu_1 -  \bmu_2  ) } 
        {  \sqrt{  \w^\T ( \bSigma_1 + \bSigma_2 )  \w  }  }.
    \end{equation}
    \eqref{eq:11d} can be solved by eigen-decomposition and a close-formed 
    solution can be derived:
\begin{equation}
            \w^\star = \bSigma_1^{-1} ( \bmu_1 - \bmu_2 ),
    \quad 
    b^{\star} = { \w^{\star} } ^{\T} \bmu_2.
\label{eq:11d_SOL}
\end{equation}
On the other hand, each node in cascaded boosting classifiers has the following form:
\begin{equation}
    \label{EQ:nodeclassifier}
    f(\x) = {\bf sign}(\w^\T \H (\x) - b),
\end{equation}
 We override the symbol $ \H (\x)$ here, 
 which denotes the output vector of all weak classifiers over the datum $ \x $.
    We can cast each node as a linear classifier over the feature space
constructed by the binary outputs of all weak classifiers.
For each node in cascade classifier, we wish to maximize the detection
rate while maintaining the false positive rate at a
moderate level  (for example, around $50.0\%$). 
    That is to say,  the problem
    \eqref{eq:11d} represents the node learning goal. 
    Boosting algorithms such as AdaBoost can be used as feature
    selection methods, and LAC used to learn a linear classifier over
    those binary features chosen by boosting.
    The advantage of this approach is that LAC considers the asymmetric 
    node learning explicitly.

However, there is a precondition on the  validity of LAC that 
for any $\w$, $\w^\T \x_1$ is a Gaussian and $\w^\T \x_2$
is symmetric. 
In the case of boosting classifiers, $\w^\T \x_1$ and $\w^\T \x_2$ can be 
expressed as the margin of positive data and negative data, respectively.
Empirically  Wu {\em et al.} \cite{wu2008fast}
verified that $\w^\T \x$ is  approximately Gaussian for a cascade face detector.
We discuss this issue in more detail in Section~\ref{sec:exp}.
Shen and Li \cite{shen2010dual} theoretically proved that
under the assumption that weak classifiers are independent,  
the margin of AdaBoost follows the Gaussian distribution, 
as long as the number of weak classifiers is {\em sufficiently large}. 
In Section~\ref{sec:exp} we verify this theoretical result by performing the 
normality test on nodes with different number of weak classifiers.

\section{Constructing Boosting Algorithms from LDA and LAC}
\label{sec:LACBoost}

    In kernel methods, the original data are non\-linear\-ly 
    mapped to a feature space by a mapping
    function $ \Phi ( \cdot ) $. The function need not be known,
    however, as rather than being applied to the data directly,
    it acts instead through the inner product  
    $ \Phi ( \x_i ) ^\T \Phi ( \x_j )  $. 
    In boosting \cite{Ratsch2002BoostSVM}, however,
    the mapping function can be seen as being explicitly known,
    as
    $
        \Phi ( \x ) : \x \mapsto [ h_1(\x),\dots,h_n(\x) ]. 
    $
    Let us consider the Fisher LDA case first because the solution to LDA
    will generalize to LAC straightforwardly, by looking at
    the similarity between \eqref{eq:11d} and \eqref{EQ:LDA1}.

    Fisher LDA 
    maximizes the between-class variance and minimizes the within-class
    variance. In the binary-class case, the more general formulation in
     \eqref{EQ:LDA1} can be expressed as
    \begin{equation}
        \label{EQ:100}
        \max_\w \;\;  \frac{ ( \bmu_1 - \bmu_2 ) ^ 2 }
                         { \sigma_1 + \sigma_2 } 
            = 
                    \frac{   \w ^\T   \C_b \w }
                         {   \w ^\T  \C_w \w },
    \end{equation}
    where $ \C_b $ and $ \C_w $ are the between-class and within-class
    scatter matrices; $ \bmu_1 $ and $ \bmu_2 $ are
    the projected centers of the two classes.
    The above problem can be equivalently reformulated as 
    \begin{equation}
        \label{EQ:101}
        \min_\w \;\;  \w ^\T \C_w \w -  \theta ( \bmu_1 - \bmu_2  )
    \end{equation}
    for some certain constant $ \theta $ and under the assumption that
    $ \bmu_1 - \bmu_2 \geq 0 $.\footnote{In our face detection experiment,
    we found that this assumption could always be satisfied.}
    Now in the feature space, our data are 
    $ \Phi( \x_i ) $, $ i=1\dots m$.
    	Define the vectors $ \e, \e_1, \e_2 \in \Real^{m}$ such that $ \e = \e_1 + \e_2 $, 
		the $ i $-th entry of $ \e_1 $ is $1/m_1$ if $ y_i = +1 $ and $0$ otherwise, and 
		the $ i $-th entry of $ \e_2 $ is $1/m_2$ if $ y_i = -1 $ and $0$ otherwise.  
    We then see that
    \begin{align}
        \bmu_1
        &  = \frac{ 1 } { m_1 } \w^\T \sum_{y_i = 1}  \Phi(\x_i)  
           = \frac{ 1 } { m_1 } \sum_{y_i = 1} \A_{ i: } \w
           \notag
         \\
         & 
        = \frac{ 1 } { m_1 } \sum_{y_i = 1} (\A \w)_i
           = \e_1 ^\T \A \w   ,
    \end{align}
    %
    %
    %
    and
    \begin{align}
     \bmu_2
     & = 
     \frac{ 1 } { m_2 }   \w^\T \sum_{y_i = -1}  \Phi(\x_i)
     = \frac{ 1 } { m_2 } \sum_{y_i = -1} \H_{ i: } \w
     = - \e_2 ^\T  \A \w,
    \end{align}
    For ease of exposition we order the training data according to their
        labels so
        the vector $ \e \in \Real^{m}$:
        \begin{equation}
        \e = [ 1/m_1,\cdots, 1/m_2,\cdots  ]^\T,      
            \label{EQ:e}
        \end{equation}
        and the first $ m_1$ components of $ \brho $ correspond to the
        positive training data and the remaining ones
        correspond to the $ m_2$
        negative data.  
        We now see that $ \bmu_1 - \bmu_2 = \e^\T \brho  $, 
        $ \C_w =  {m_1 }/{ m } \cdot \bSigma_1 + {m_2 }/{ m } \cdot  \bSigma_2 $
        with
        $ \bSigma_{1,2} $ the covariance matrices. 
        Noting that
        \[
        \w^\T \bSigma_{1,2} \w = \frac{1}{m_{1,2} ( m_{1,2} - 1 ) }
        \sum_{i>k, y_i=y_k = \pm 1}
        (\rho_i - \rho_k )^2,
        \]
        we can easily rewrite the original problem \eqref{EQ:100} (and \eqref{EQ:101})
        into:
        \begin{align}
            \min_{\w,\brho}
             ~& 
            \tfrac{1}{2} \brho ^\T \Q \brho - \theta \e^\T
            \brho,
          \notag\\  
            \quad {\rm s.t.} ~&\w \psd {\bf 0},
             {\bf 1}^\T \w = 1,
      \notag \\
      &
     {\rho}_i = ( \A \w )_i,
        i = 1,\cdots, m.
            \label{EQ:QP1}
        \end{align}
        Here
        $ \Q = \begin{bmatrix} \Q_1 & {\bf 0} \\ {\bf 0} & \Q_2  \end{bmatrix} $
        is a block matrix with
        \[
        \Q_1 = 
        \begin{bmatrix}
                \tfrac{1}{m} & -\tfrac{1}{ m (m_1-1)} & \ldots & -\tfrac{1}{m(m_1-1)} \\
                -\tfrac{1}{m(m_1-1)} & \tfrac{1}{ m } & \ldots & -\tfrac{1}{m(m_1-1)} \\
                \vdots & \vdots & \ddots & \vdots \\
                -\tfrac{1}{m(m_1-1)} & -\tfrac{1}{m (m_1-1)} & \ldots &\tfrac{1}{m } 
        \end{bmatrix},
        \]
        and $ \Q_2 $ is similarly defined by replacing $ m_1$ with $ m_2 $ in $ \Q_1$:
        \[
        \Q_2 = 
            \begin{bmatrix}
                \tfrac{1}{m}         & -\tfrac{1}{m(m_2-1)} & \ldots &
                -\tfrac{1}{m(m_2-1)}                                \\
                -\tfrac{1}{m(m_2-1)} & \tfrac{1}{m}         & \ldots &
                -\tfrac{1}{m(m_2-1)}                                \\
                \vdots               & \vdots & \ddots & \vdots     \\
                -\tfrac{1}{m(m_2-1)} & -\tfrac{1}{m(m_2-1)} & \ldots
                &\tfrac{1}{m} 
            \end{bmatrix}. 
        \]
  Also note that we have introduced a constant $ \frac{1}{2} $ before the quadratic term
  for convenience. The normalization 
                  constraint $ { \bf 1 } ^\T \w = 1$
                  removes the scale ambiguity of $ \w $. Without it the problem is
                  ill-posed.

  We see from the form of \eqref{eq:11d} that the covariance of 
  the negative data is not involved in
  LAC and thus
  that if we set
   $ \Q = \begin{bmatrix} \Q_1 & {\bf 0} \\ {\bf 0} & \bf 0  \end{bmatrix} $ then
  \eqref{EQ:QP1} becomes the optimization problem of LAC.

%
%
        There may be extremely (or even infinitely) many weak
        classifiers in $ \cH $, the set from which
        $ h ( \cdot  ) $ is selected, meaning that the dimension of the optimization
        variable $ \w $ may also be extremely large.
    So \eqref{EQ:QP1} is a semi-infinite quadratic program (SIQP).
    We show how column generation can be used to solve this problem.
    To make column generation applicable, we need to derive a
    specific Lagrange dual of the primal
    problem.

\subsection{The Lagrange Dual Problem}

    We now derive the Lagrange dual of the quadratic problem \eqref{EQ:QP1}.    
    Although we are only interested in the variable $ \w $, we need to
    keep the auxiliary variable $ \boldsymbol  \rho $ in order to obtain
    a meaningful dual problem. The Lagrangian of \eqref{EQ:QP1}
    is
    \begin{align*}
        L (  
           \underbrace{ \w, \brho}_{\rm primal}, \underbrace{ \u, r }_{\rm dual}
        ) 
    & = \tfrac{1}{2} \brho ^\T \Q \brho   -  \theta \e^\T \brho 
    + \u ^\T ( \brho - \A \w  ) - \q ^\T \w 
    \notag
    \\ 
    & + r ( {\bf 1} ^\T \w - 1 ), 
    \end{align*} 
    with 
     $
     \q \psd \bf 0 
     $.  
    $ \sup_{\u, r} \inf_{ \w, \brho } L ( \w, {\brho}, \u, r  ) $
    gives the following  Lagrange dual:
           \begin{align}
               \max_{\u, r} ~& -r - \overbrace{
                       \tfrac{1}{2} 
                      (\u - \theta \e)^\T \Q^{-1} (\u - \theta \e)
                      }^{\rm regularization},  
                      %
                      %
            {\rm \;\; s.t.} 
                     ~
                     %
                     %
                      \sum_{i=1}^m u_i \A_{i:} \nsd r {\bf 1 } ^\T. 
            \label{EQ:dual}
        \end{align}
        In our case, $ \Q $ is rank-deficient and its inverse does not exist
        (for both LDA and LAC).
        We can simply regularize $ \Q $ with $ \Q + \delta {\bf I} $ with  
        $ \delta $ a  small positive constant.
        Actually, $ \Q $ is a diagonally dominant matrix but not strict diagonal dominance.
        So $ \Q + \delta {\bf I} $ with any $ \delta > 0 $ is strict 
        diagonal dominance and by the Gershgorin circle theorem, 
        a strictly diagonally   dominant matrix must be invertible. 
        
        One of the  KKT optimality conditions between the dual and primal
        \begin{equation}
         \brho^\star = - \Q^{-1} ( \u ^ \star - \theta \e ),
        \end{equation}
        which can be used to establish the connection between the dual optimum and
        the primal optimum. 
        This is obtained by the fact that 
        the gradient of $ L $ w.r.t. $ \brho $ must vanish at 
        the optimum, $ { \partial L } / { \partial \rho_i } = 0  $,
        $ \forall i = 1\cdots n $.

        Problem \eqref{EQ:dual} can be viewed as a regularized LPBoost problem.
        Compared with the hard-margin LPBoost \cite{Demiriz2002LPBoost},
        the only difference is the regularization term in the cost function.
        The duality gap between the primal \eqref{EQ:QP1} and the 
        dual \eqref{EQ:dual} is zero. In other words, the solutions of
        \eqref{EQ:QP1} and \eqref{EQ:dual} coincide. 
        Instead of solving \eqref{EQ:QP1} directly, one calculates the
        most violated constraint in \eqref{EQ:dual} iteratively for
        the current solution and adds this constraint to the
        optimization problem.  In theory, any column that violates
        dual feasibility can be added.  To speed up the convergence,
        we add the most violated constraint by solving the following
        problem:
      \begin{equation}
          h' ( \cdot ) =  {\rm argmax}_{h( \cdot ) } ~ 
            \sum_{i=1}^m u_i y_i h ( \x_i).
         \label{EQ:pickweak}
      \end{equation}
        This  is exactly the same as the one that standard AdaBoost
        and LPBoost use for producing the best weak classifier. That
        is to say, to find the weak classifier that has minimum weighted
        training error.  We summarize the LACBoost/FisherBoost
        algorithm in
        Algorithm~\ref{alg:QPCG}.
        By simply changing  $ \Q_2 $, Algorithm~\ref{alg:QPCG} can be used to
        train either LACBoost or FisherBoost.
        Note that to obtain an actual strong classifier,
        one may need to include an offset $ b $, {\em i.e.} the final classifier
        is $ \sum_{j=1}^n h_j (\x) - b $ because from the cost function
        of our algorithm \eqref{EQ:101}, we can see that the cost function itself
        does not minimize any classification error. It only finds a projection
        direction in which the data can be maximally separated. A simple line
        search can find an optimal $ b $.  
        Moreover, when training a cascade, we need to tune this offset anyway
        as shown in \eqref{EQ:nodeclassifier}.

        The convergence of Algorithm~\ref{alg:QPCG} is guaranteed by
        general column generation or cutting-plane algorithms, which
        is easy
        to establish.  When a new $ h'(\cdot) $ that violates dual
        feasibility is added, the new optimal value of the dual
        problem (maximization) would decrease.  Accordingly, the
        optimal value of its primal problem decreases too because they
        have the same optimal value due to zero duality gap. Moreover
        the primal cost function is convex, therefore in the end it
        converges to the global minimum.

   \linesnumbered\SetVline
   \begin{algorithm}[t]
   \caption{Column generation for SIQP.}   
   \centering
   {\small
   \begin{minipage}[]{.94\linewidth}
   \KwIn{Labeled training data $(\x_i, y_i), i = 1\cdots m$;
         termination threshold $ \varepsilon > 0$;
         regularization
         parameter $ \theta $; maximum number of iterations
         $ n_{\rm max}$.
    }
       { {\bf Initialization}:
            $ m = 0 $;
            $ \w = {\bf 0} $;
            and $ u_i = \frac{1}{ m }$, $ i = 1$$\cdots$$m$. 
   }

   \For{ $ \mathrm{iteration} = 1 : n_\mathrm{max}$}
   {
     %
     %
     \ADot
         Check for the optimality: \\
         {\bf if}{ $ \mathrm{iteration}  > 1 $ \text{ and } $
                    \sum_{ i=1 }^m  u_i y_i h' ( \x_i )  
                           < r + \varepsilon $},
                  \\
                  { \bf then}
                  \\
                  $~ ~ ~$ break;  and the problem is solved; 
        
     \ADot
         Add $  h'(\cdot) $ to the restricted master problem, which
         corresponds to a new constraint in the dual;
     %
     %
     %

      \ADot  
         Solve the dual problem \eqref{EQ:dual}
         (or the primal problem \eqref{EQ:QP1}) 
         and update $ r $ and
         $ u_i$ ($ i = 1\cdots m$). 
%
%

      \ADot  
         Increment the number of weak classifiers
             $n = n + 1$.  
%
%
%
   }
   \KwOut{
         %
         The selected features are $ h_1, h_2, \dots, h_n $.
         The final strong classifier is:
         $ F ( \x ) = \textstyle \sum_{j=1}^{ n } w_j h_j( \x ) - b $.
         Here the offset $ b $ can be learned by a simple search. 
     
   }
   \end{minipage}
   } 
   \label{alg:QPCG}
   \end{algorithm}

    At each iteration of column generation,
    in theory, we can  solve either the dual \eqref{EQ:dual} 
    or the primal problem \eqref{EQ:QP1}. 
    However, 
    in practice, it could be  much faster to solve the primal problem because
     \begin{enumerate}
         \item
  Generally,
    the primal problem has a smaller size, hence faster to solve.
    The number of variables of \eqref{EQ:dual} is $ m $ at each iteration,
    while the number of variables is the number of iterations   
    for the primal problem. 
    For example, in Viola-Jones' face detection framework, 
    the number of training data $ m = 
    10,000 $ and  $ n_{\rm max} = 200 $. In other words, the  
    primal problem has at most $ 200 $ variables in this case;

    \item
   
        The dual problem is a standard QP problem. It has no special structure
        to exploit. As we will show, the primal problem belongs to
        a special class of problems and
        can be efficiently 
        solved using entropic/exponentiated 
        gradient descent (EG) \cite{Beck03Mirror,Globerson07Exp}. 
        A fast QP solver is extremely important for training a 
        object detector because we need to the solve a few thousand 
        QP problems. 
     \end{enumerate}

    We can recover both of the dual variables 
    $ \u^\star, r^\star $ easily from 
    the primal variable $ \w^\star $:
    \begin{align}
        \u^\star &=  - \Q\brho^\star  + \theta \e; \label{EQ:KA}\\
         r^\star  &=   \max_{ j = 1 \dots n }  
         \bigl\{ \textstyle \sum_{i=1}^m u_i^\star \A_{ij} \bigr\}.
         \label{EQ:KAb}
    \end{align}
    The second equation is obtained by the fact that 
     in the dual problem's  constraints, at optimum,
    there must exist at least one  $ u_i^\star$ 
    such that the equality holds. That is to say,
    $ r^\star $ is the largest {\em edge}
    over all weak classifiers.

    We give a brief introduction to the EG algorithm before we proceed. 
    Let us first define the unit simplex 
    $ \Delta_n =  \{ 
    \w \in \Real^n  :  {\bf 1 } ^ \T \w = 1, \w \psd {\bf 0 }
    \} $. 
    EG efficiently solves the convex optimization problem
    \begin{equation}
        \label{EQ:EG1}
        \min_\w \,\,\, f(\w), \,
        {\rm s.t.} \,\, \w \in \Delta_n,  
    \end{equation}
    under the assumption that the objective function $ f(\cdot) $
    is a convex Lipschitz continuous function with Lipschitz
    constant $ L_f $ w.r.t. a fixed given norm $ \lVert \cdot \rVert$.
    The mathematical definition of $ L_f $ is that
    $  | f(\w) -f (\z) |  \leq L_f   \lVert  \x - \z  \rVert$ holds
    for any $ \x, \z $ in the domain of $ f(\cdot)$.
    The EG algorithm is very simple:
    \begin{enumerate}
    \item
        Initialize with $\w^0 \in \text{the interior of }  \Delta_n$;
    \item
        Generate the sequence $ \{ \w^k \} $, $ k=1,2,\cdots$
        with:
        \begin{equation}
            \label{EQ:EQ2}
            \w^k_j = \frac{ \w^{k-1}_j \exp [ - \tau_k f'_j ( \w^{k-1} ) ]  } 
            { \sum_{j=1}^n  \w^{k-1}_j \exp [ - \tau_k f'_j ( \w^{k-1} ) ] }. 
        \end{equation}
        Here $ \tau_k $ is the step-size. 
        $ f'( \w ) = [ f_1'(\w), \dots,  f_n'(\w) ] ^\T $
        is the gradient of $ f(\cdot) $;
    \item
        Stop if some stopping criteria are met.
    \end{enumerate}
    The learning step-size can be determined by 
        $    \tau_k = \frac{ \sqrt{ 2\log n } } { L_f }
                     \frac{1}{ \sqrt{ k } },
        $
    following \cite{Beck03Mirror}.
    In \cite{Globerson07Exp}, the authors have 
    used a simpler strategy to set the learning rate. 

    EG is a very useful tool for solving large-scale 
    convex minimization problems over the unit simplex. 
    Compared with standard QP solvers like Mosek 
    \cite{Mosek}, EG is much faster. EG makes it possible 
    to train a detector using almost the same amount of time
    as using standard AdaBoost as the majority of time is
    spent on weak classifier training and bootstrapping.

    In the case that $ m_1 \gg 1 $, 
    \[
            \Q_1 =
            \frac{1}{m}
    \begin{bmatrix}
        1   & -\tfrac{1}{  m_1-1} & \ldots & -\tfrac{1}{  m_1-1 } \\
        -\tfrac{1}{ m_1-1 } & 1 & \ldots & -\tfrac{1}{ m_1-1} \\
        \vdots & \vdots & \ddots & \vdots \\
        -\tfrac{1}{ m_1-1 } & -\tfrac{1}{ m_1-1 } & \ldots &  1 
    \end{bmatrix} \approx  \frac{1}{m} \bf I.
    \]
    Similarly, for LDA, $ \Q_2 \approx  \frac{1}{m} \bf I$
    when $ m_2 \gg 1 $. Hence,
    \begin{equation}
     \label{EQ:Q10}    
      \Q \approx 
     \begin{cases}
          \frac{1}{m} \bf I;    & \text{for Fisher LDA},    \\
           \frac{1}{m} \begin{bmatrix}
                                          {\bf I} & {\bf 0} \\
                                          {\bf 0} & {\bf 0}
                                 \end{bmatrix},
                                & \text{for LAC}.
    \end{cases}
    \end{equation}
    Therefore, the problems involved can be simplified when $ m_1 \gg 1 $ and
    $ m_2 \gg 1 $ hold.
    The primal problem \eqref{EQ:QP1} equals
     \begin{align}
            \min_{\w,\brho} ~& \tfrac{1}{2} \w ^\T ( \A^\T \Q \A) \w  
            - ( \theta \e ^\T
            \A ) \w,
            %
            %
            \;\;
            {\rm s.t.}
            \;
            %
            %
            \w \in \Delta_n.
            \label{EQ:QP2}
        \end{align}
        We can efficiently solve \eqref{EQ:QP2} 
        using the EG method. 
        In EG there is an important parameter $ L_f $, which is
        used to determine the step-size. 
        $ L_f $ can be determined by the $\ell_\infty $-norm of $ | f' (\w) | $.
        In our case $ f' (\w) $ is a linear function, which is trivial to compute.
        The convergence of EG is guaranteed; see \cite{Beck03Mirror} for details.
        
        In summary, when using EG to solve the primal problem, 
        Line $ 5 $ of Algorithm~\ref{alg:QPCG} is:        
        
        \ADot 
        {\em Solve the primal problem \eqref{EQ:QP2} using EG, and update 
        the dual variables $ \u $ with \eqref{EQ:KA}, and $ r $ with \eqref{EQ:KAb}.
        }

        \comment{
        %
        FIXME for the journal version
        %
         In our case \eqref{EQ:QP2},
         $ L_f $ can be set to $ |  \A ^\T \A  |_\infty + \theta m \norm[1]{ \e ^\T \A }$
         with $ |  \A ^\T \A  |_\infty $
        the maximum magnitude of the matrix $ \A ^\T \A $.
        $ L_f $ is the upper bound of the $ \ell_1$-norm of the cost function's gradient:
        $  \norm [1]{  \w^\T (\A^\T \A ) + \theta m \e^\T \A    } \leq L_f $. 
        With the triangle inequality, we have
        $  \norm [1]{  \w^\T (\A^\T \A ) - \theta m \e^\T \A    } $ $
        \leq \norm[1]{ \w^\T (\A^\T \A )  } $ $ + \theta m \norm[1]{ \e^\T \A } $ $
        \leq |  \A ^\T \A  |_\infty $ $ + \theta m \norm[1]{ \e ^\T \A }$.

        The following theorem ensures the convergence of the EG algorithm 
        for our problem.
        \begin{theorem}
            Under the assumption that the learning step-size
            satisfies $ 0 < \tau_k  <  \frac{1} { n | \A^\T \A |_\infty  }  $,
            then we have
            $ f( \w^\star  ) \leq f (\w ^{ k } ) \leq f (\w ^\star ) 
            + \frac{1} { \tau_k ( k -1 ) } {\rm KL} [  \w^\star \Vert \w^0   ] $,
            where $ f(\cdot) $ denotes the objective function in \eqref{EQ:QP2};
            and $  | \X |_\infty  $ denotes the maximum magnitude element of 
            the matrix $ \X $; $ {\rm KL} ( \u \Vert \v )$ computes the 
            Kullback–Leibler divergence of $ \u, \v \in \Delta_n $.
        \end{theorem}
        The proof follows the proof of Theorem 1 in \cite{Globerson07Exp}. 
        }


\section{Experiments}
\label{sec:exp}

    In this section, we first show an experiment on toy data
    and then apply the proposed methods to face detection 
    and pedestrian detection.

\subsection{Synthetic Testing}

\begin{figure}[t!]
    \centering
        \includegraphics[width=.35\textwidth]{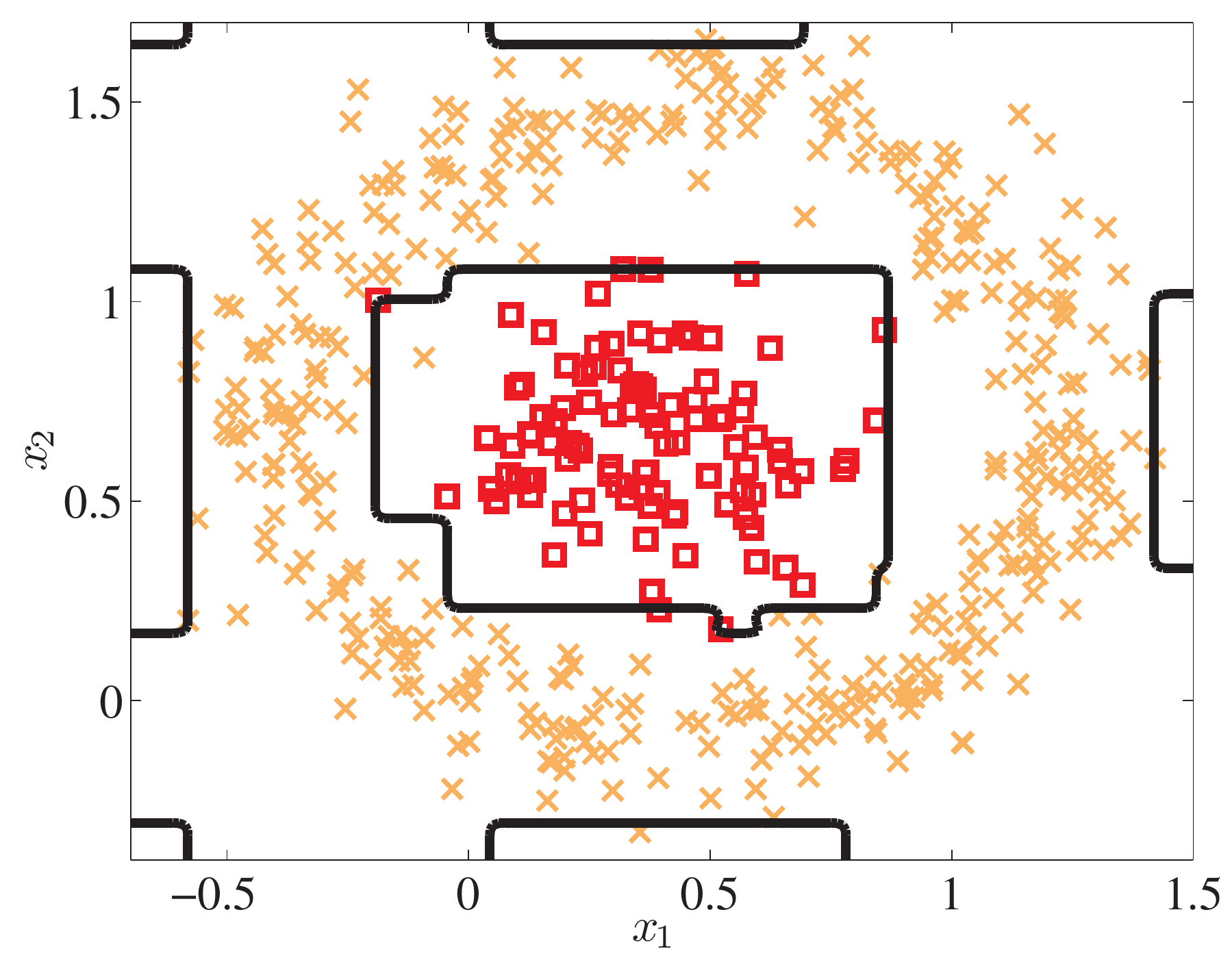}
        \includegraphics[width=.35\textwidth]{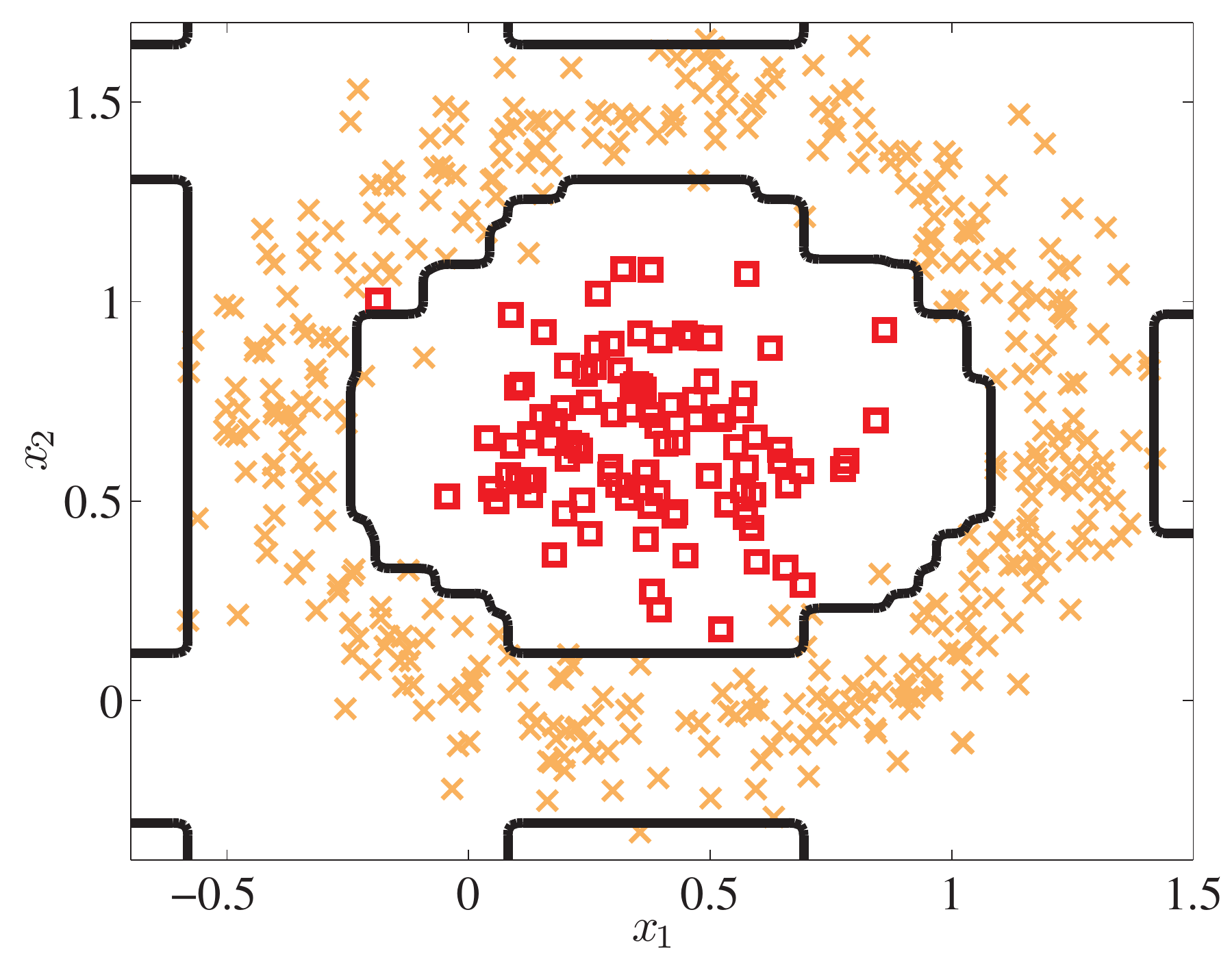}
    \caption{Decision boundaries of 
    AdaBoost (top) and FisherBoost (bottom) on $2$D artificial data
    (positive data represented by $ \square $'s and negative data
    by $\times$'s). Weak classifiers are decision stumps.
    In this case, FisherBoost intends to correctly classify more
    positive data in this case. 
    }
    \label{fig:toy}
\end{figure}

    First, let us show a simple example on a synthetic dataset 
    (more negative data than positive data)
    to illustrate
    the difference between FisherBoost and AdaBoost.
    Fig. \ref{fig:toy} demonstrates the subtle difference of the classification 
    boundaries obtained by AdaBoost and FisherBoost when applied to these data. 
    We can see that
    FisherBoost 
    places more emphasis 
    on correctly classifying positive data points than does AdaBoost.
    This might be due to the fact that AdaBoost only optimizes the overall 
    classification accuracy. 
    This finding is consistent with the result in \cite{Paisitkriangkrai2009CVPR}.
    
\begin{figure*}[t]
    \centering
        \includegraphics[width=.32\textwidth]{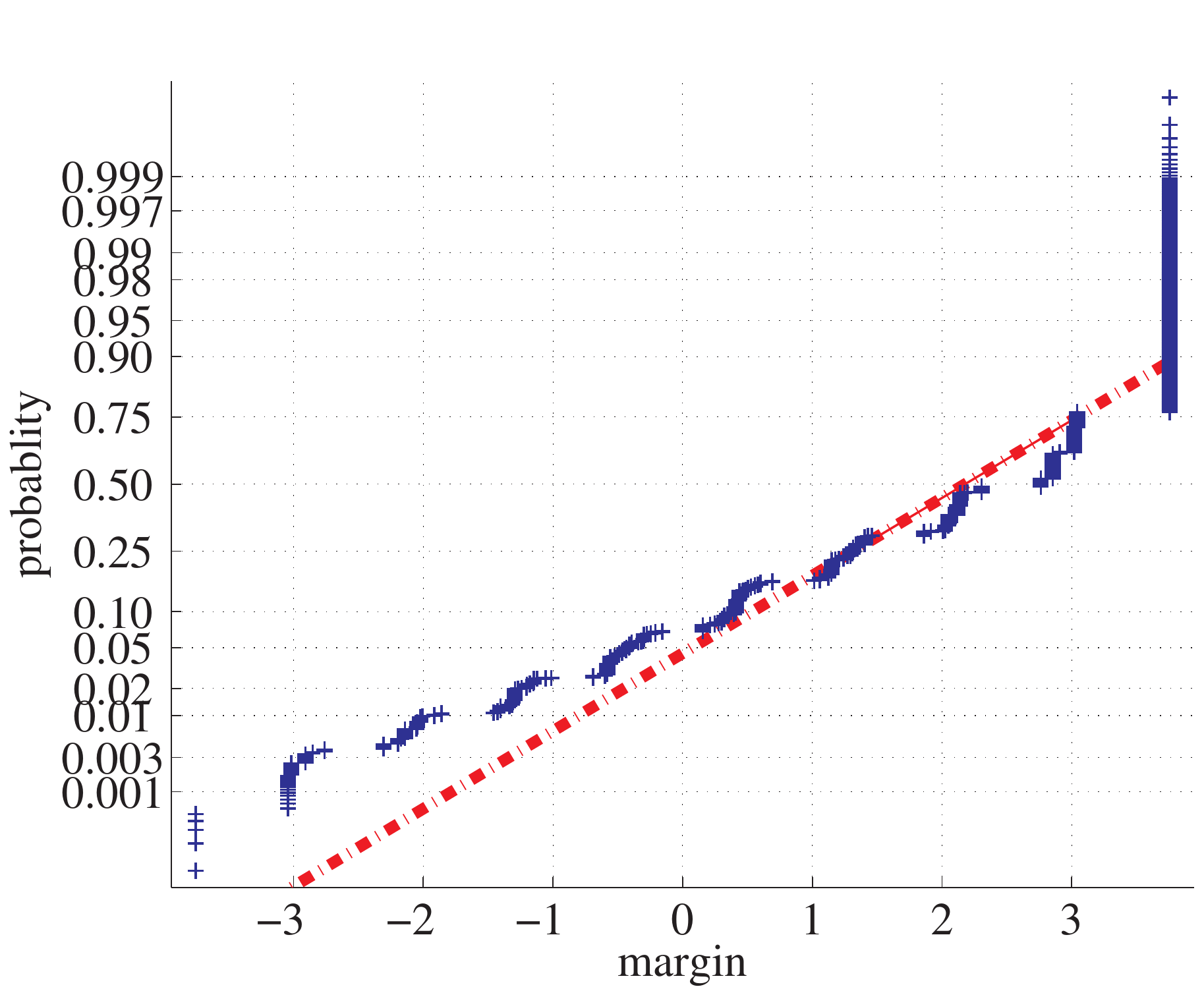}
        \includegraphics[width=.32\textwidth]{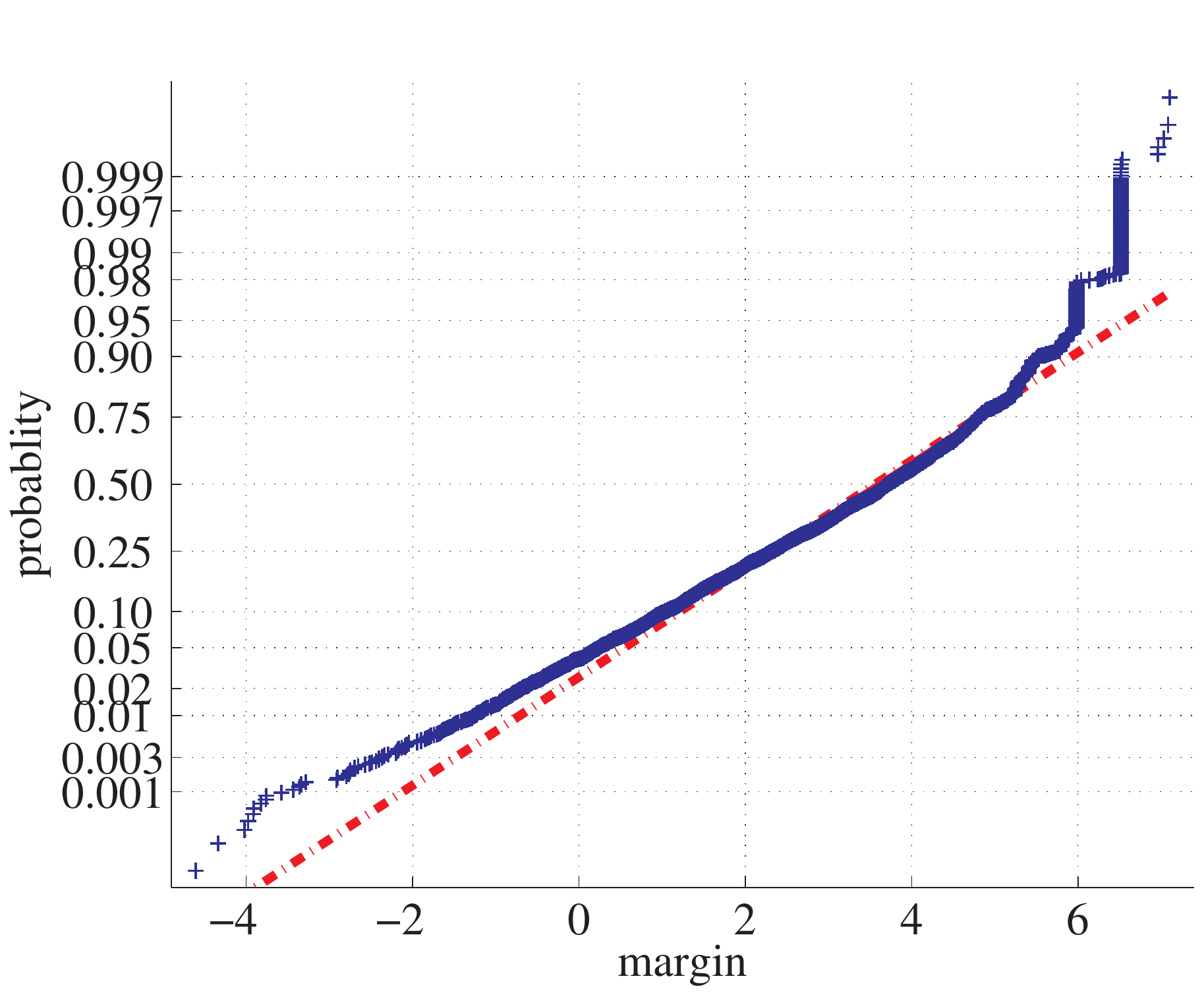}
        \includegraphics[width=.32\textwidth]{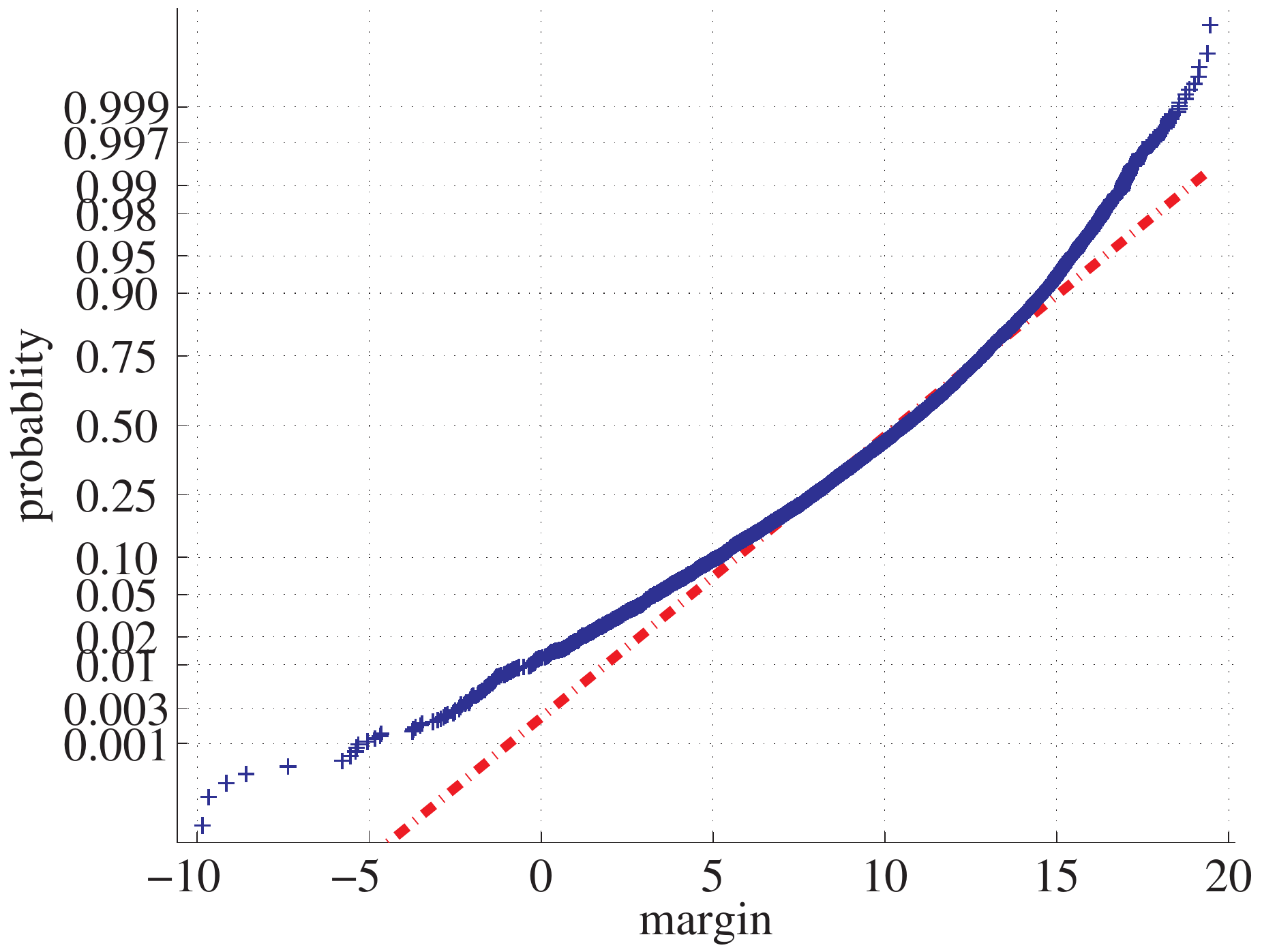}
    \caption{Normality test (normal probability plot)
    for the face data's margin distribution of nodes $1$, $2$, $3$.
    The $ 3 $ nodes contains $ 7 $, $ 22 $, $ 52 $ weak classifiers respectively.
    Curves close to a straight line mean close to a Gaussian.
    }
    \label{fig:normplot}
\end{figure*}

\subsection{Face Detection Using a Cascade Classifier}

    In this section, we compare FisherBoost and LACBoost with the 
    state-of-the-art in face detection.

    We first show some results about the validity 
    of LAC (and Fisher LDA) post-processing for improving 
    node learning in object detection.

\def\dmin{ d_{\mathrm{min}} }
\def\fmax{ f_{\mathrm{max}} }
\def\ftarget{ F_{\mathrm{fp}} }
\comment{

\begin{algorithm}[h!]
\caption{The procedure for training a multi-exit cascade with LACBoost or FisherBoost.}
 \centering
   \begin{minipage}[]{0.91\linewidth}
   \KwIn{
   \begin{itemize}
       \itemsep -2pt
      \item
         A training set with $m$ examples, which are ordered by their labels 
         ($m_1$ positive examples followed by $m_2$ negative examples);
      \item
         $\dmin$:  minimum acceptable detection rate per node;
     \item
         $\fmax$:  maximum acceptable false positive rate per node;
      \item
         $ \ftarget $: target overall false positive rate.
   \end{itemize}
   }
   { {\bf Initialize}:
   \\
   $t = 0$;    ({\em node index}) \\
   $ n = 0 $;  ({\em total selected weak classifiers up to the current node})  
   \\
   $D_t = 1$;  
   $ F_t = 1$. ({\em overall detection rate and false positive 
                     rate up to the current node})
   }

\While{ $  \ftarget < F_t $ }
{
$ t = t + 1$;    ({\em increment node  index})
\\
      \While{
              $ f_t > \fmax  $ 
            }
      { 
       ({\em current false positive rate $ f_t $ is not acceptable yet})\\
       (a) $ n = n + 1$, and generate a weak classifier and its associated 
       linear coefficient 
       using LACBoost or FisherBoost.
       %
       %
       \\
       (b) Adjust threshold $b$ of the current boosted strong classifier 
       \[ F({\bf x}) =
       \sum_{j = 1}^{n} w_j  h_j ({\bf x}) - b\]
       such that $d_t \geq \dmin$.\\ 
       (c) Update the false  
       positive rate of the current node $f_t$ with the learned boosted classifier. \\
      
      }
      Update $ D_{t+1} = D_t \times d_t $; $ F_{t+1} = F_t \times f_t $; \\
      Remove correctly classified negative samples from negative training set.
      
      \If{ $\ftarget < F_t $ }
      {Evaluate the current cascaded classifier on the negative images
      and add misclassified samples into the negative training set;}
}
\KwOut{
      A multi-exit cascade classifier with $n$ weak classifiers
      and $t $ nodes.

}
%
%
%
\end{minipage}
\label{alg:MultiExit_LAC}
\end{algorithm}
%
}
\setcounter{AlgoLine}{0}
\linesnumbered\SetVline
\begin{algorithm}[t]
\caption{The procedure for training a multi-exit cascade with LACBoost or FisherBoost.}
\centering   
{\small{
%
   \begin{minipage}[]{0.94\linewidth}
    \KwIn{
  
        \ADot
         A training set with $m$ examples, which are ordered by their labels 
         ($m_1$ positive examples followed by $m_2$ negative examples);

        \ADot
         $\dmin$:  minimum acceptable detection rate per node;
    
         \ADot
         $\fmax$:  maximum acceptable false positive rate per node;
      
        \ADot
         $ \ftarget $: target overall false positive rate.
         }   
   { {\bf Initialize}:
   \\
   $t = 0$;    ({\em node index}) \\
   $ n = 0 $;  ({\em total selected weak classifiers up to the current node})  
   \\
   $D_t = 1$;  
   $ F_t = 1$. ({\em overall detection rate and false positive 
                     rate up to the current node})
   }

\While{ $  \ftarget < F_t $ }
{
$ t = t + 1$;    ({\em increment node  index})
\\
      \While{
              $ d_t < \dmin  $ 
            }
      { 
       ({\em current detection rate $ d_t $ is not acceptable yet})\\
    \ADot 
        $ n = n + 1$, and generate a weak classifier and update all the weak classifiers'
        linear coefficient 
        using LACBoost or FisherBoost.
       %
       %
       
     \ADot 
       Adjust threshold $b$ of the current boosted strong classifier 
       \[ F^t({\bf x}) =
       \sum_{j = 1}^{n} w^t_j  h_j ({\bf x}) - b\]
       such that $f_t \approx  \fmax$. 
     
     \ADot
        Update the detection rate of the current node 
       $d_t$ with the learned boosted classifier. 
      }
      Update $ D_{t+1} = D_t \times d_t $; $ F_{t+1} = F_t \times f_t $\;

      Remove correctly classified negative samples from negative training set.
      
      \If{ $\ftarget < F_t $ }
      {Evaluate the current cascaded classifier on the negative images
      and add misclassified samples into the negative training set;
      ({\em bootstrap})
      }
}
\KwOut{
      A multi-exit cascade classifier with $n$ weak classifiers
      and $t $ nodes.

}

\end{minipage}
}}
    \label{alg:MultiExit_LAC}
\end{algorithm}

   The algorithm for training a multi-exit cascade is shown 
   in Algorithm \ref{alg:MultiExit_LAC}.


        As is described above, LAC and LDA assume that the margins of training data
        associated with the node classifiers in such a
        cascade exhibit a Gaussian distribution.
	    In order to evaluate the degree to which this is true for the face detection task we show in
        Fig. \ref{fig:normplot} normal probability plots of the margins of the positive training
        data for each of the first three node classifiers in a multi-exit LAC cascade.  The figure shows that
        the larger the number of weak classifiers used the more closely the margins follow a
        Gaussian distribution.  From this we infer that LAC, Fisher LDA post-processing, 
        (and thus LACBoost and FisherBoost) can be
        expected to achieve a better performance when a larger number of weak classifiers are used.
        We therefore apply LAC/LDA only within the later nodes (for example, $9$ onwards) 
        of a multi-exit cascade
        as these nodes contain more weak classifiers. 
    Because the late nodes of a multi-exit cascade contain more weak classifiers than the standard
    Viola-Jones' cascade, 
    we conjecture that the multi-exit cascade 
    might meet the Gaussianity requirement better. 
    We have compared 
    multi-exit cascades with LDA/LAC post-processing against 
    standard cascades with LDA/LAC post-processing in \cite{wu2008fast}
    and slightly improved performances were obtained.

    Six methods are evaluated with  the multi-exit cascade framework
    \cite{pham08multi},
    which are AdaBoost with LAC post-processing,
    or LDA post-processing,
    AsymBoost with LAC or LDA post-processing \cite{wu2008fast}, and our
    FisherBoost,
    LACBoost. We have also implemented Viola-Jones'
    face detector as the baseline \cite{viola2004robust}.
    As in \cite{viola2004robust}, five  basic types of Haar-like features 
    are calculated, resulting in a $162, 336$ dimensional over-complete
    feature set on an image of $24 \times 24$ pixels.
    To speed up the weak classifier training, as in \cite{wu2008fast},
    we  uniformly sample $10\%$ of features for
    training weak classifiers (decision stumps). 
    The training data are $9,832$ mirrored $24 \times 24$ face images 
    ($5,000$ for training and $4,832$ for validation) and $7,323$ large background images,
    which are the same as in \cite{wu2008fast}. The face images for training
    are provided by Viola and Jones' work---the same
    as the face training data used in \cite{viola2004robust}.

\begin{figure*}[t]
    \begin{center}
        \includegraphics[width=.2438\textwidth]{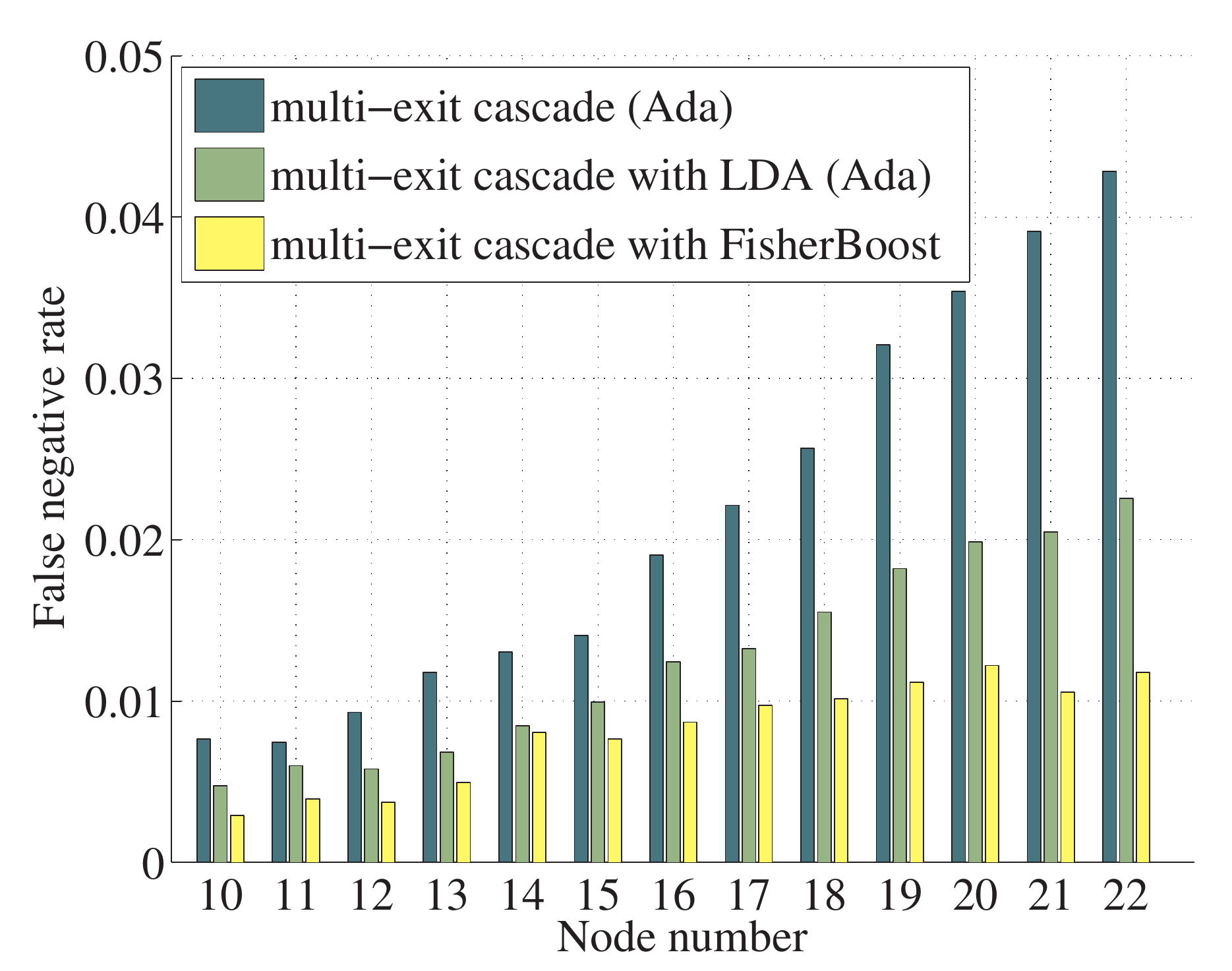}
        \includegraphics[width=.2438\textwidth]{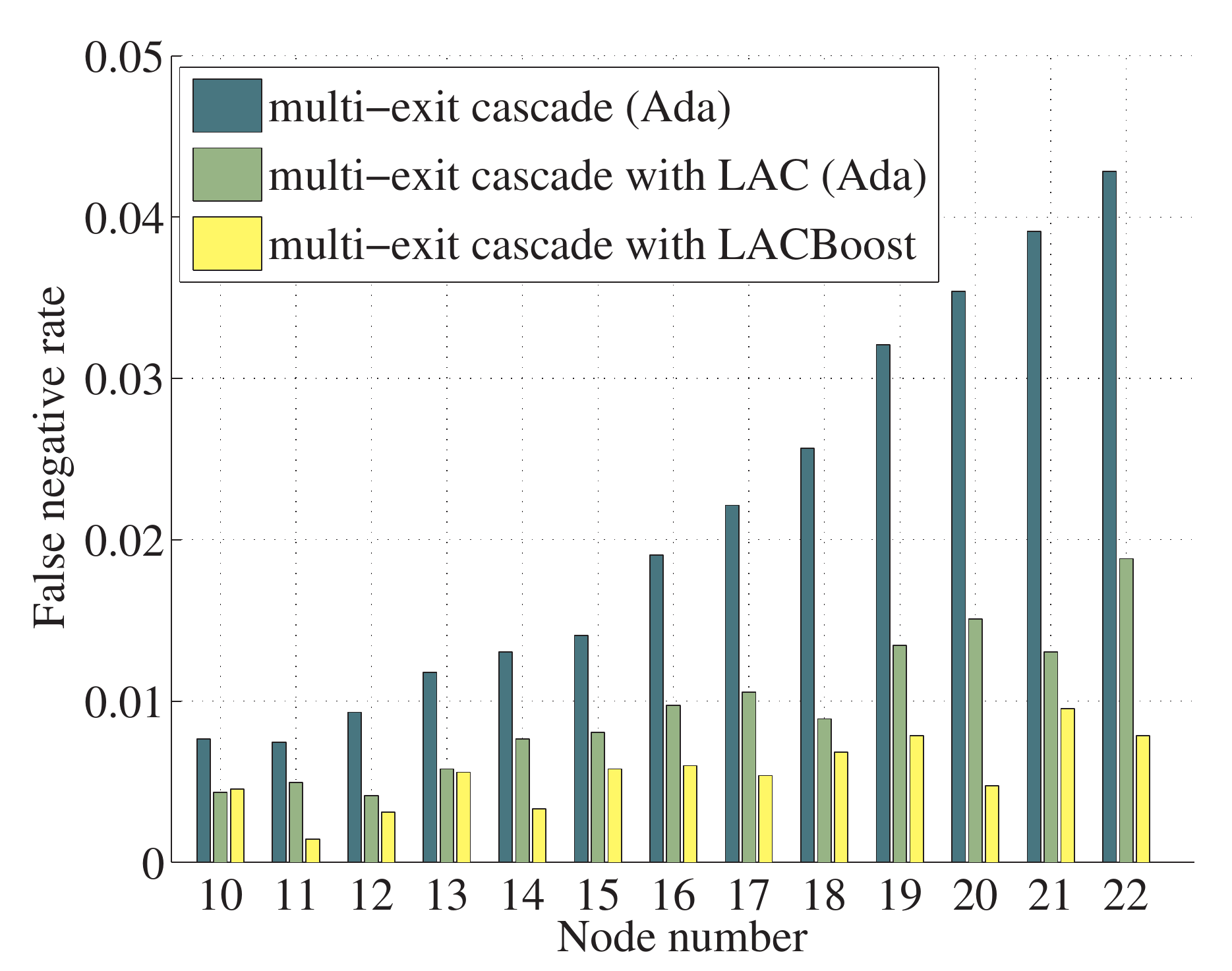}
        \includegraphics[width=.2438\textwidth]{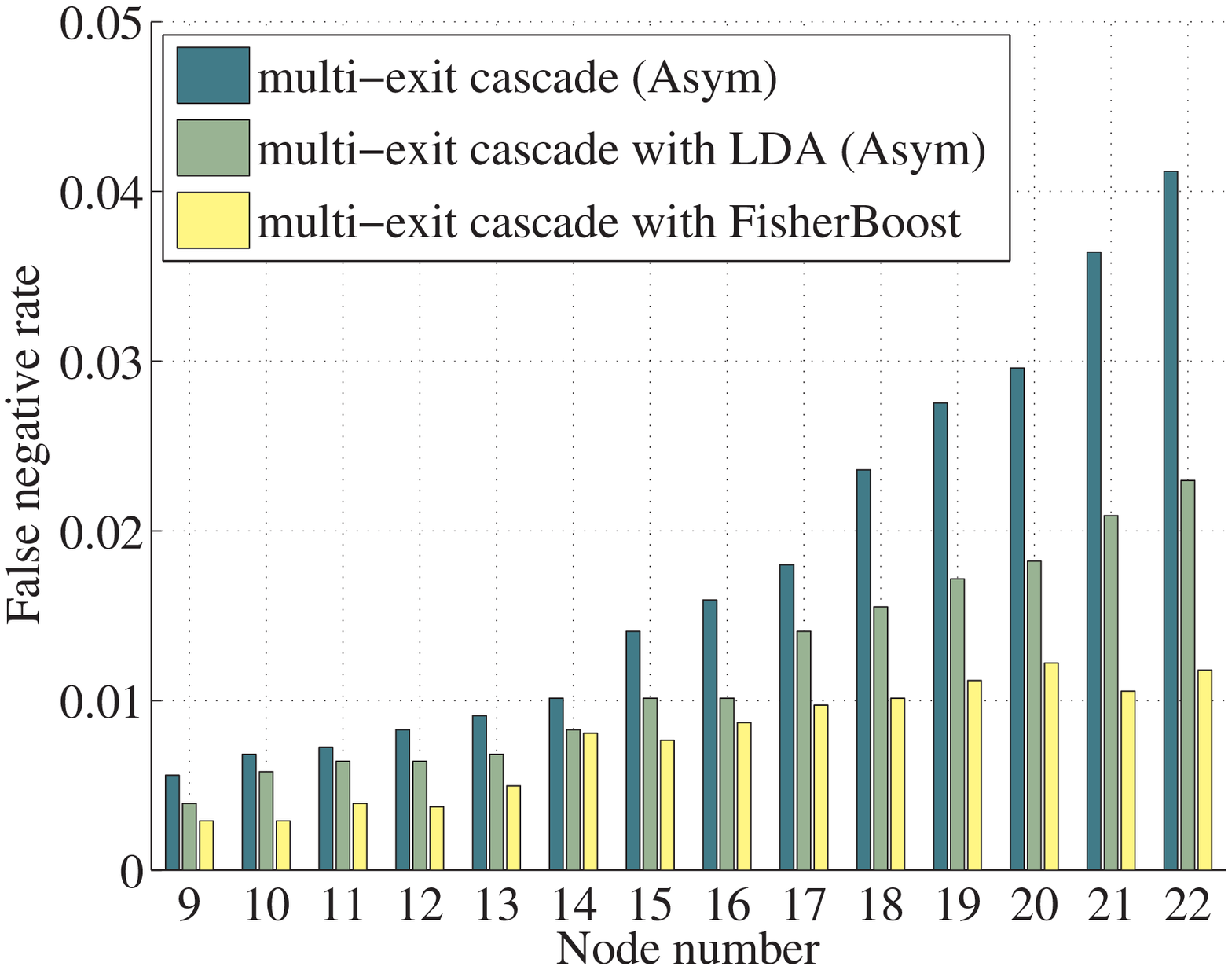}
        \includegraphics[width=.2438\textwidth]{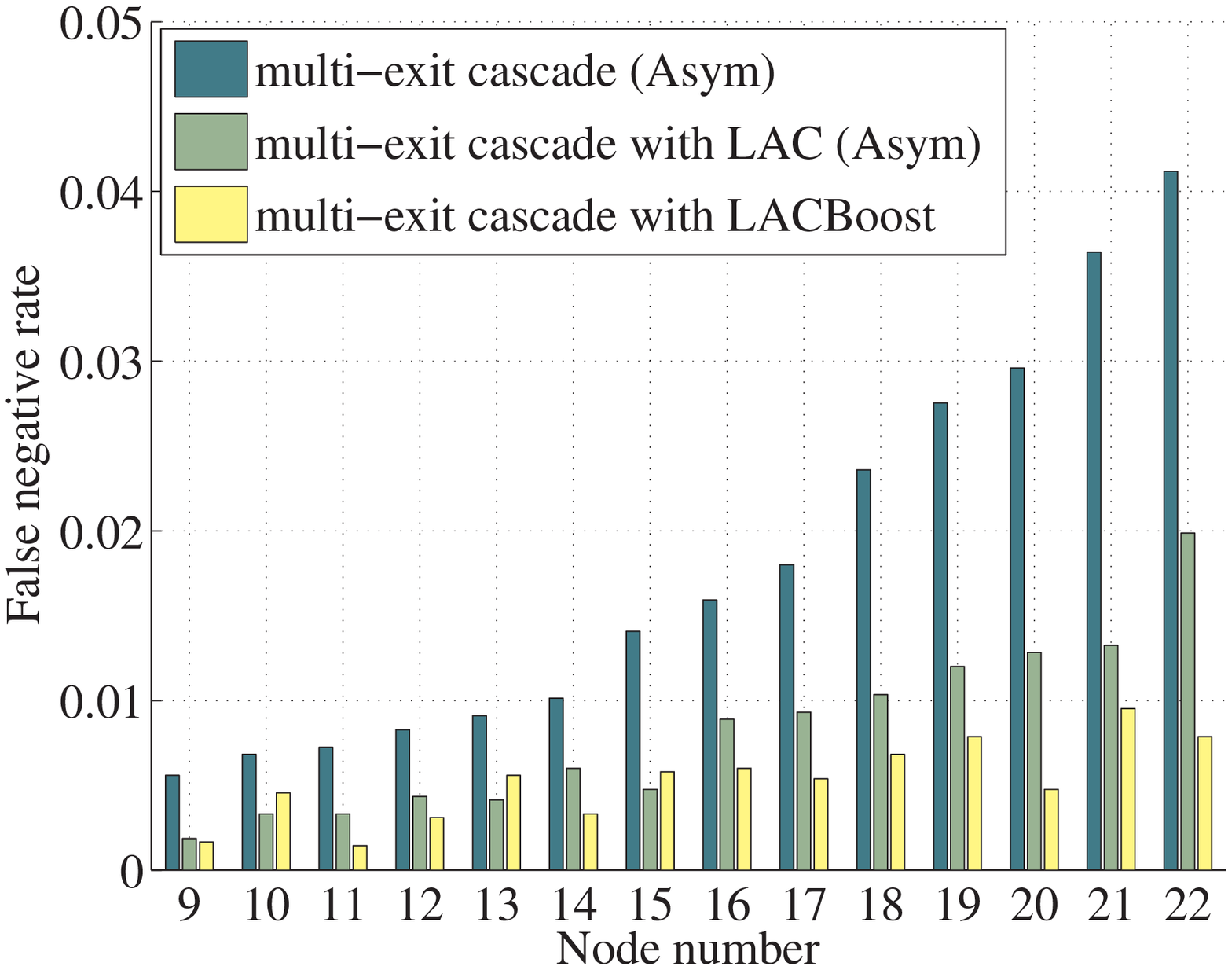}
    \end{center}
    \caption{Node performances on the validation data for face detection. 
       ``Ada'' means that features are selected using AdaBoost;
    ``Asym'' means that features are selected using AsymBoost.
    }
    \label{fig:node1}
\end{figure*}

    Multi-exit cascades with $22$ exits and $2,923$ weak classifiers are
    trained with each of the methods listed above.
       In order to ensure a fair comparison,
       we have used the same cascade structure and 
    same number of weak classifiers for all the compared learning methods.
    The indexes of exits are pre-set to simplify the training
    procedure.

      For our FisherBoost and LACBoost, we have an important parameter  
      $\theta$, which is chosen from 
      $\{
      \frac{1}{10}, 
      \frac{1}{12},
      \frac{1}{15},
      $
      $
      \frac{1}{20}, 
      \frac{1}{25},
      \frac{1}{30},
      $
      $
      \frac{1}{40},
      \frac{1}{50} 
      \}$.
      We have not carefully tuned this parameter using cross-validation.
      Instead, we train a $10$-node cascade for each candidate $ \theta$, 
      and choose the one with the best {\em training} 
      accuracy.\footnote{To train a complete $22$-node cascade
      and choose the best $ \theta $
      on cross-validation data may give better detection rates.} 
     At each exit, negative examples misclassified by current cascade are
     discarded, and new negative examples are bootstrapped from the background
     images pool.  
     In total, billions of negative examples are extracted from the pool.
        The positive training data and validation data keep unchanged  during the
        training process.

Our experiments are performed on a workstation with $8$ Intel Xeon
E$5520$ CPUs and $32$GB RAM.
It takes about $3$ hours  to train the multi-exit cascade with AdaBoost or AsymBoost.
For FisherBoost and LACBoost, it takes less than $ 4 $ hours to train
a complete multi-exit cascade.\footnote{Our implementation is in C++ and 
                                        only the weak classifier
                                        training part is parallelized using OpenMP. 
}
    In other words, 
    our EG algorithm takes less than $ 1 $ hour to
    solve the primal QP problem (we need to solve a QP at each iteration).
    As an estimation of the computational complexity, 
    suppose that the number of training
    examples is $  m $, number of weak classifiers is $ n $.
%
%
%
    At each iteration of the cascade training,
    the complexity of solving the primal QP using EG is
    $  O( m n  + k n^2) $ with $ k $ the iterations 
    needed for EG's convergence.
    The complexity  for  training the weak classifier  is
    $ O( m  d ) $ with $d$ the number of all Haar-feature patterns.
    In our experiment, $ m = 10,000 $,
    $ n \approx 2900 $,
    $d = 160,000$,
    $ k < 500 $.
      So the majority of the computational cost of the training process 
      is bound up in the weak classifier training.

    We have also experimentally observed the speedup of EG against standard QP solvers. 
    We solve the primal QP defined by \eqref{EQ:QP2} using EG and Mosek \cite{Mosek}.
    The QP's size is $ 1,000 $ variables. 
    With the same accuracy tolerance (Mosek's primal-dual gap is set to $ 10^{-7}$
    and EG's convergence tolerance is also set to $ 10^{-7}$), 
    Mosek takes $1.22 $ seconds and EG is
    $ 0.0541 $ seconds on our standard Desktop.
    So EG is about $ 20 $ times faster. 
    Moreover,  at iteration $ n + 1 $ of training the cascade,
    EG can take advantage of the last iteration's solution
    by starting EG from a small perturbation of the previous solution. 
    Such a warm-start gains a $ 5 $ to $ 10\times $ speedup in our experiment,
    while there is no off-the-shelf warm-start QP solvers available yet.

    We evaluate the detection performance on the MIT+CMU frontal
    face test set. 
This dataset is made up of $507$ frontal faces in $130$ images with 
different background.

If one positive output has less than $50\%$ variation of shift and
scale from the ground-truth, we treat it as a true positive, otherwise
a false positive.

In the test phase, 
the scale factor of the scanning window is set to $1.2$ and 
the stride step is set to $1$ pixel.
Two performance metrics are used here: one for each node and one for 
the entire cascade.
The node metric is how well the classifiers meet the node learning objective,
which provides useful information about 
the capability of each method to achieve the node learning goal.
The cascade metric uses the receiver operating characteristic (ROC)
to compare the entire cascade's performance.
Note that multiple factors impact on the cascade's performance, however,
including: the classifier set,
the cascade structure,  bootstrapping {\em etc}.

    Fig.~\ref{fig:node1} shows the false-negative rates for the various forms of node classifiers
    when applied to the MIT+CMU face data.  The figure shows that FisherBoost and LACBoost exhibit
    significantly better node classification performance than the post-processing approach, which
    verifies the advantage of selecting features on the basis of the node learning goal.  Note that
    the performance of FisherBoost and LACBoost is very similar, but also that LDA or LAC
    post-processing can considerably reduce the false negative rates over the standard Viol-Jones'
    approach, which corresponds with the findings in~\cite{wu2008fast}.

%
%

    The ROC curves in Fig.~\ref{fig:ROC1} demonstrate the superior performance of FisherBoost and
    LACBoost in the face detection task.  Fig.~\ref{fig:ROC1} also shows that LACBoost does not
    outperform FisherBoost in all cases, in contract to the node performance 
    (detection rate) results.  Many factors
    impact upon final performance, however, and these sporadic results are not seen as being
    particularly indicative.
    One possible cause is that LAC makes the assumption of Gaussianity 
    and symmetry data distributions,
    which may not hold well in the early nodes. 
    Wu {\em et al.} have observed the same phenomenon that LAC post-processing
    does not outperform LDA post-processing in some cases.

    The error reduction results of FisherBoost and LACBoost in 
    Fig.~\ref{fig:ROC1} are not as great as those in Fig. \ref{fig:node1}.
    This might be explained by the fact that the
    cascade and negative data bootstrapping 
    are compensating for the inferior node classifier performance to some extent.
   
    We have also compared our methods with the boosted greedy sparse LDA (BGSLDA) in
    \cite{Paisitkriangkrai2009CVPR,GSLDA2010Shen},
    which is considered one of the state-of-the-art.
    %
    %
    FisherBoost and LACBoost outperform 
    BGSLDA with AdaBoost/AsymBoost 
    in the detection rate. 
    Note that BGSLDA uses the standard cascade.  

\begin{figure*}[t]
    \begin{center}
        \includegraphics[width=.38\textwidth]{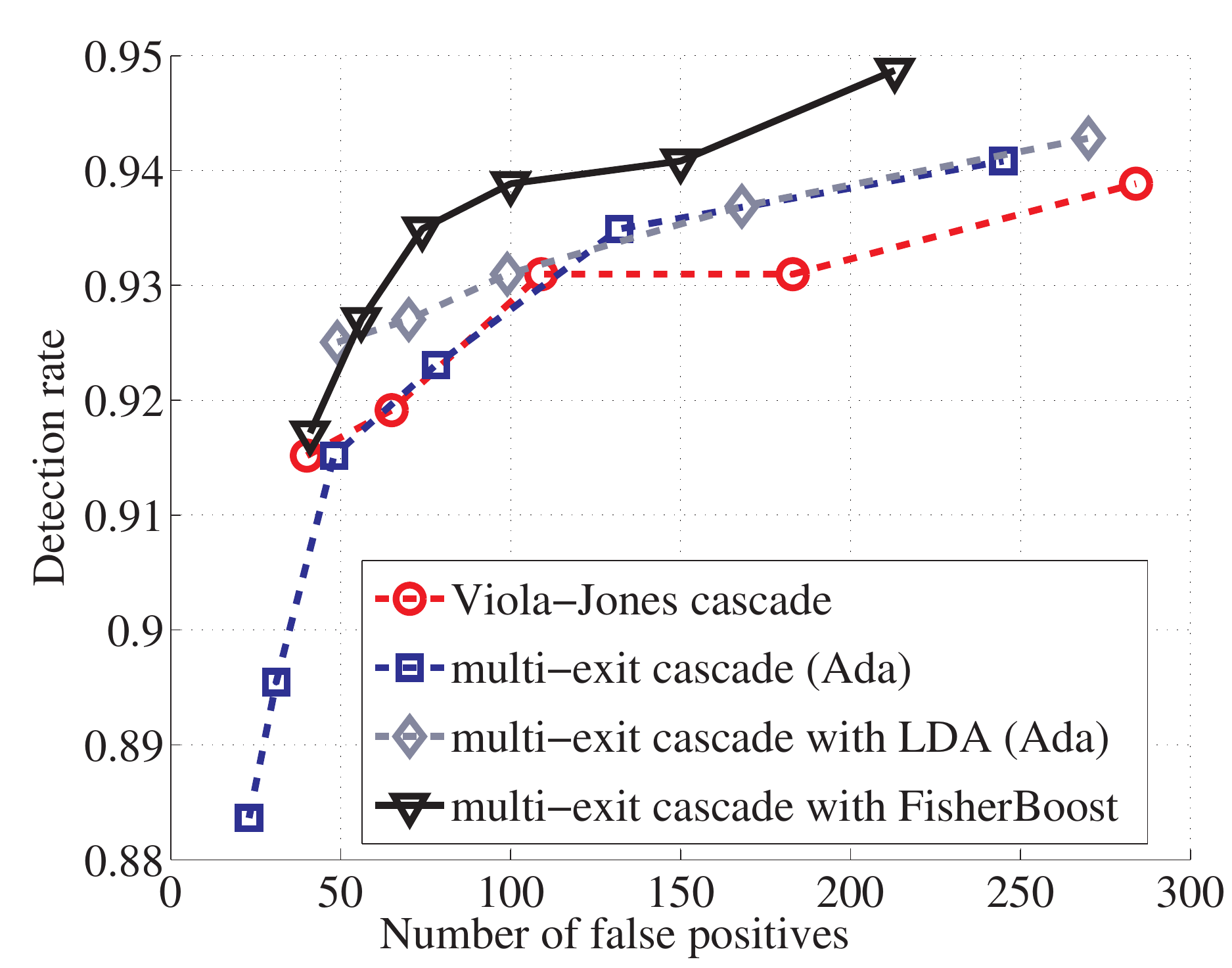}
        \includegraphics[width=.38\textwidth]{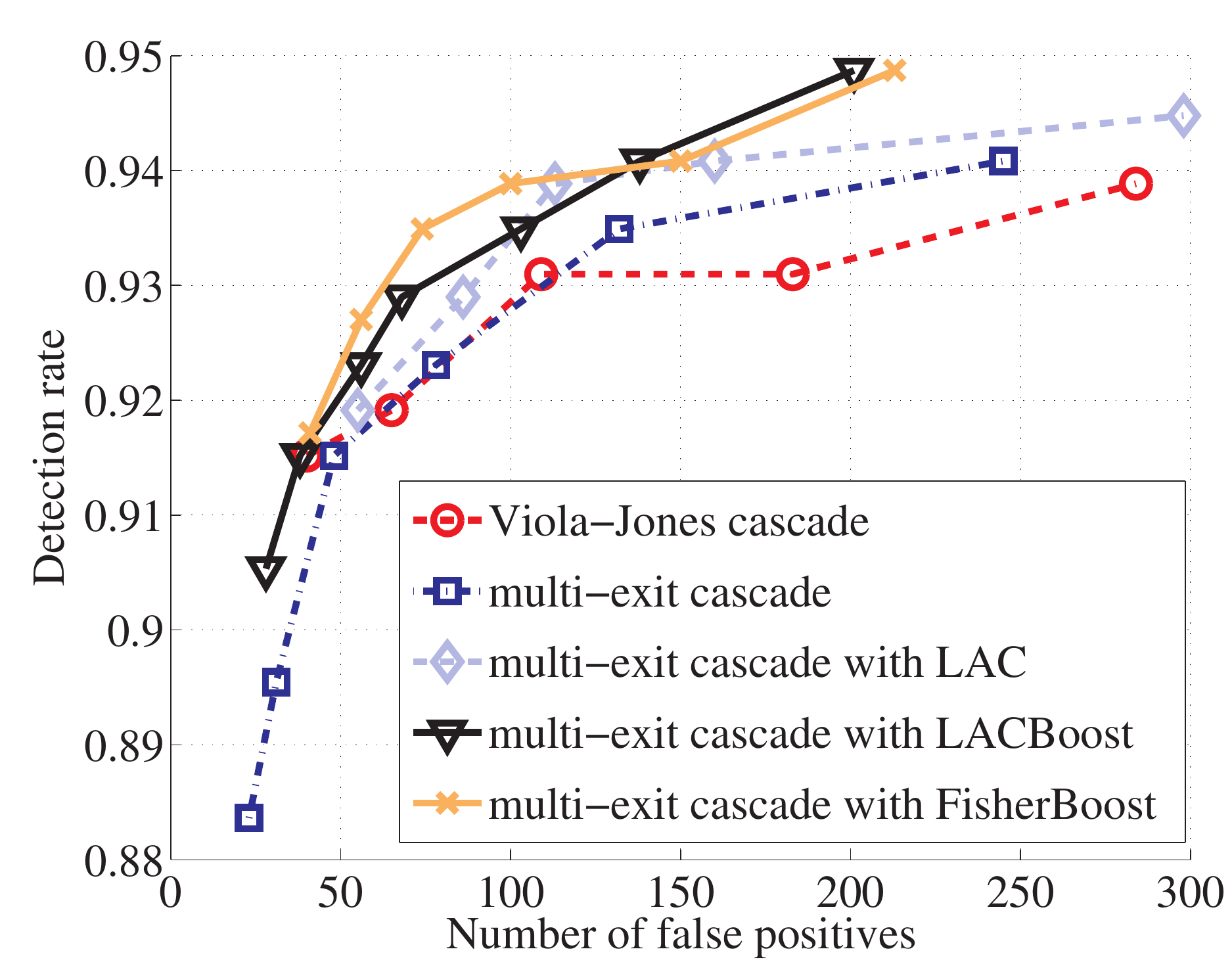}
        \includegraphics[width=.38\textwidth]{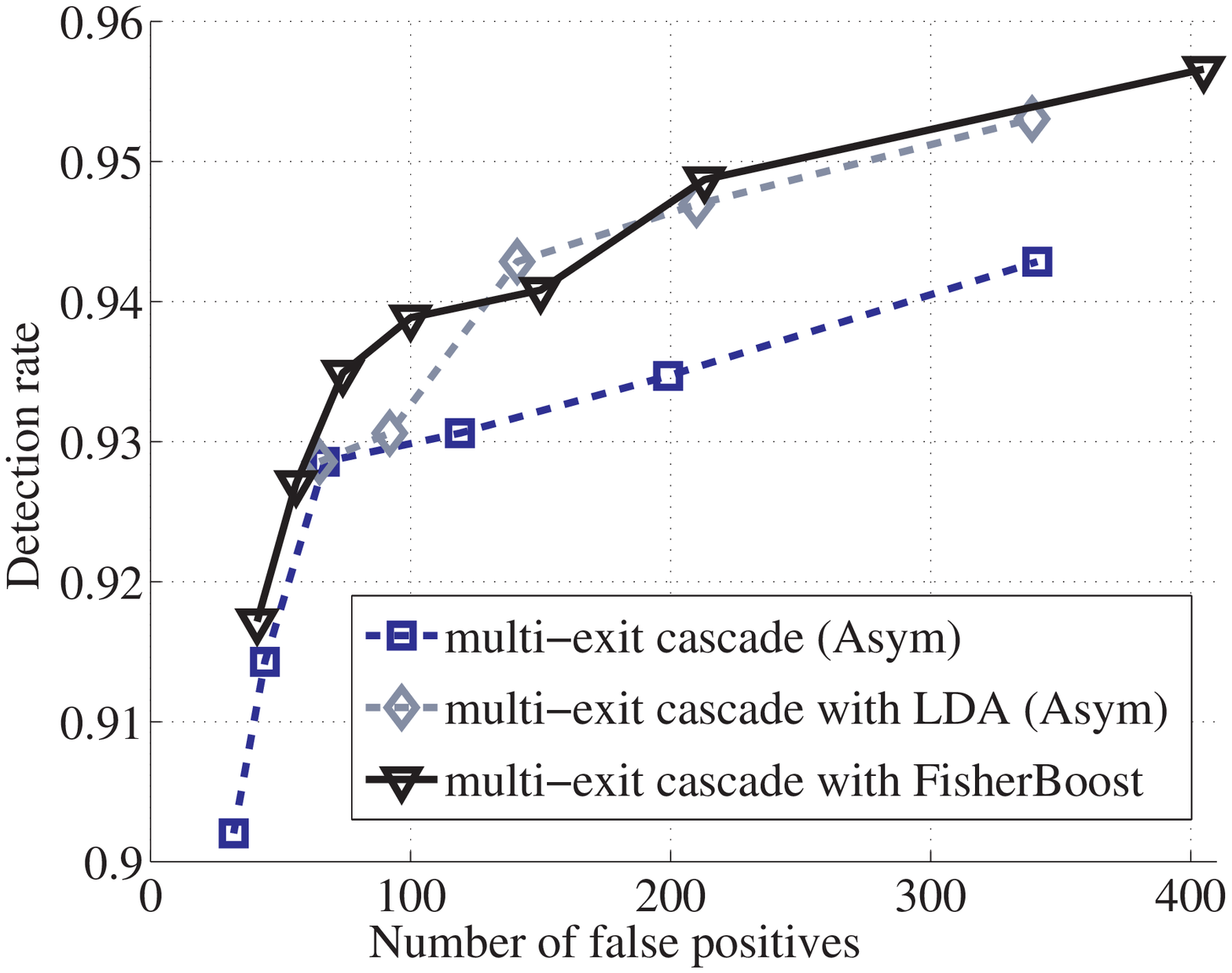}
        \includegraphics[width=.38\textwidth]{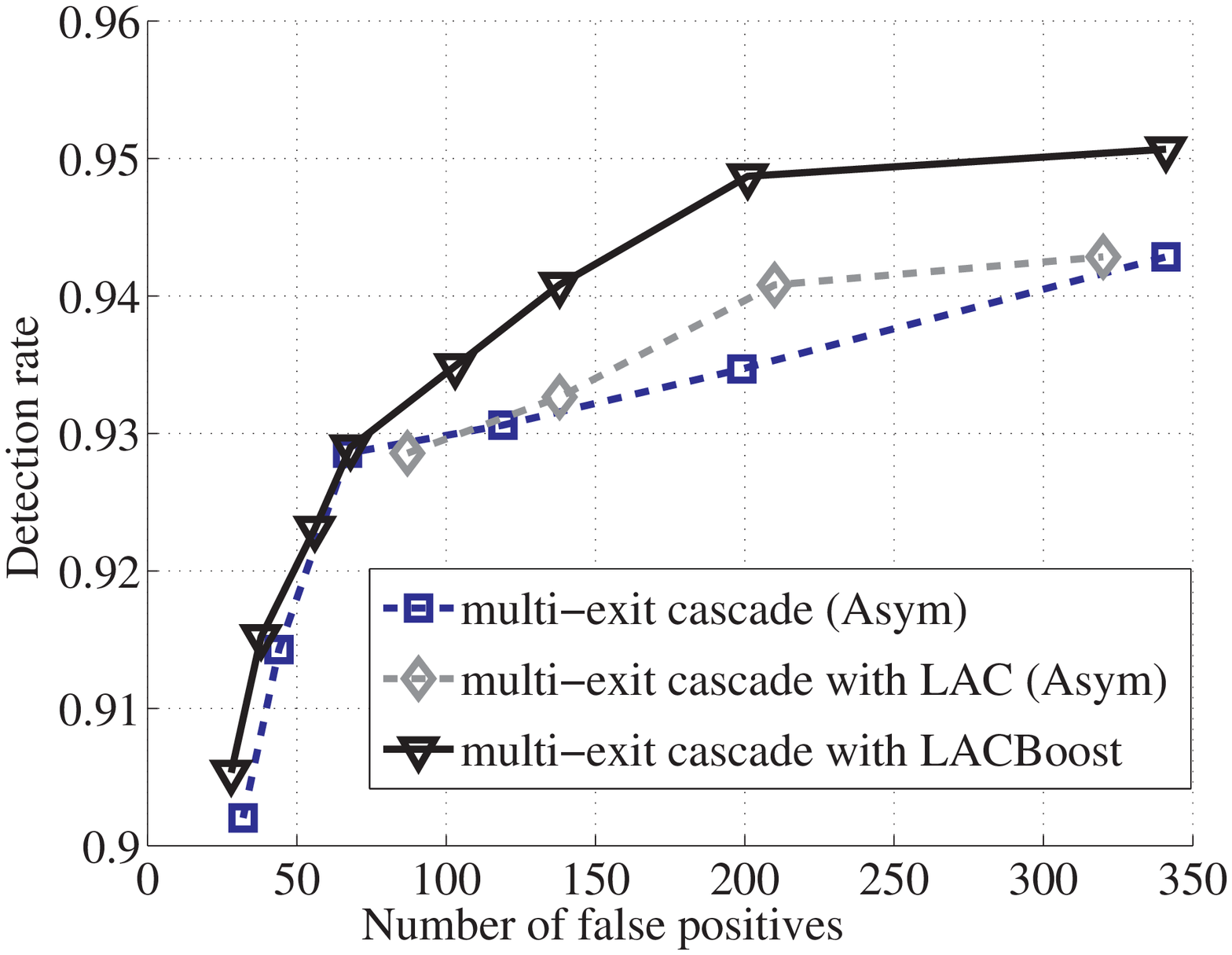}
    \end{center}
    \caption{Cascade performances using ROC curves 
    (number of false positives versus detection rate) on the MIT+CMU test data.
    ``Ada'' means that features are selected using AdaBoost. Viola-Jones cascade
    is the method in
    \cite{viola2004robust}.
      ``Asym'' means that features are selected using AsymBoost.
    }
    \label{fig:ROC1}
\end{figure*}

%
%
%
%
%

\comment{
\begin{figure}[t]
    \begin{center}
        \includegraphics[width=.45\textwidth]{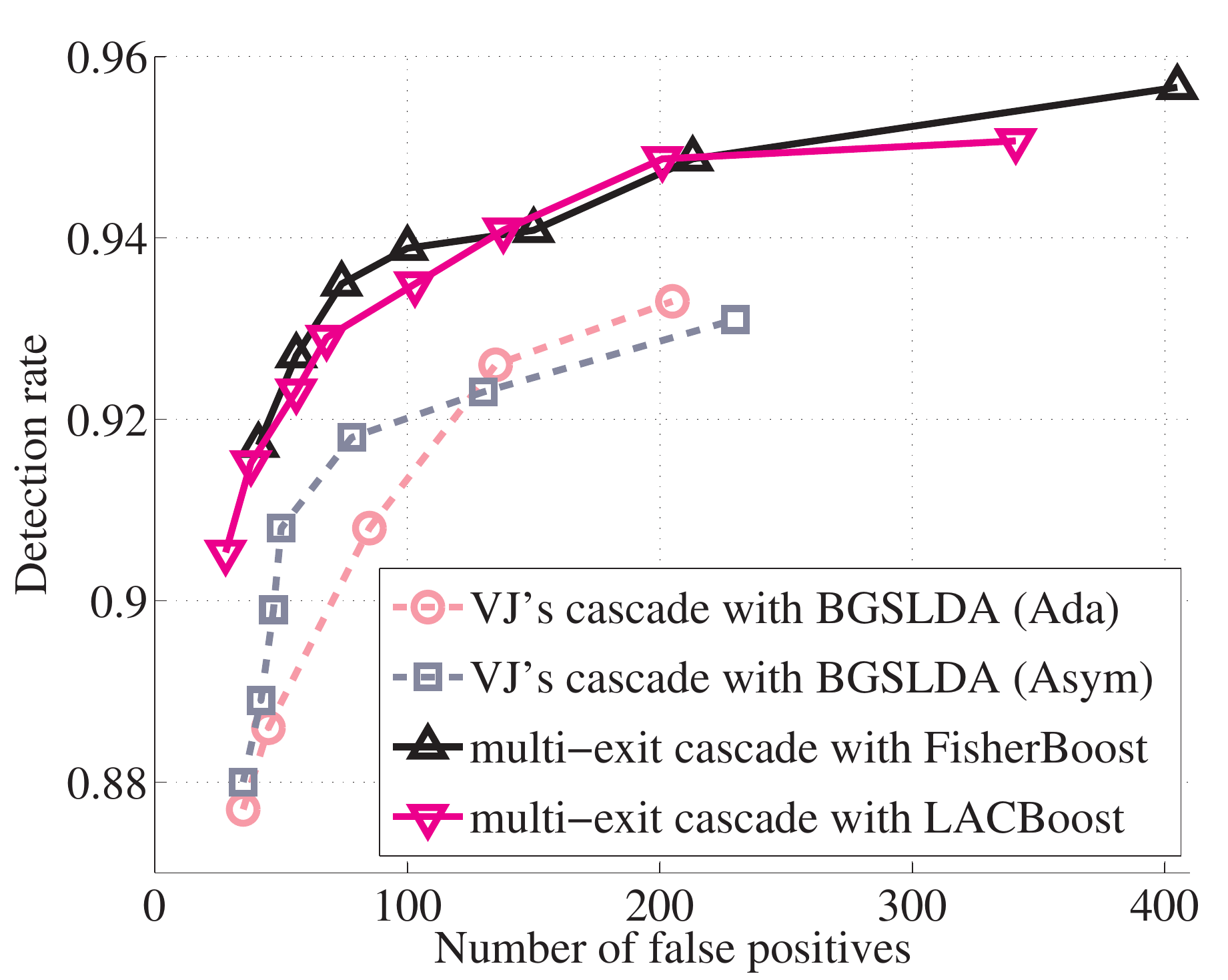}
    \end{center}
    \caption{Cascade performances on the MIT+CMU test data. We compare our methods with BGSLDA in
    \cite{Paisitkriangkrai2009CVPR}.}
    \label{fig:ROC_BGSLDA}
\end{figure}
}

\subsection{Pedestrian Detection Using a Cascade Classifier}

\begin{figure*}[t]
    \begin{center}
        \includegraphics[width=.2438\textwidth]{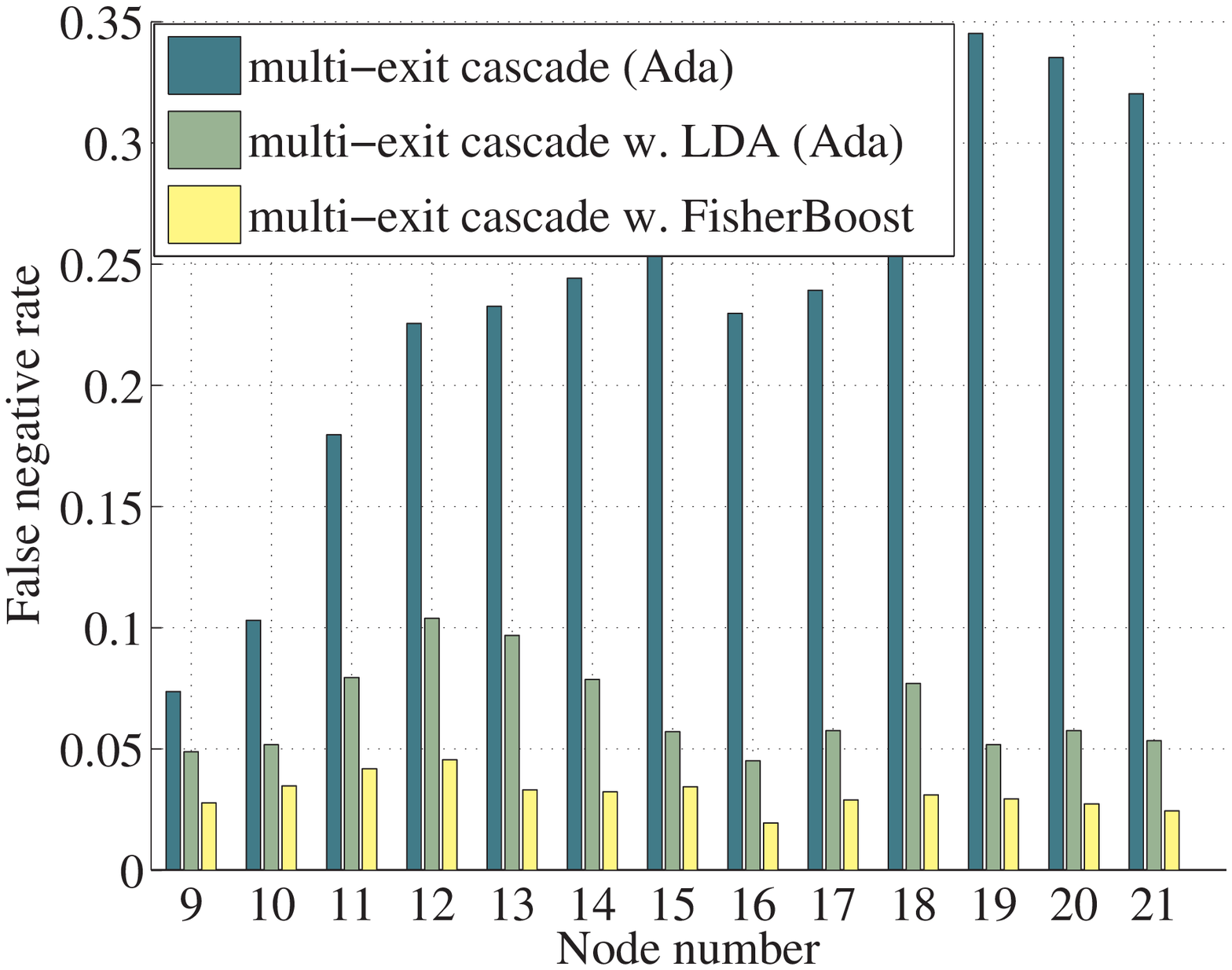}
        \includegraphics[width=.2438\textwidth]{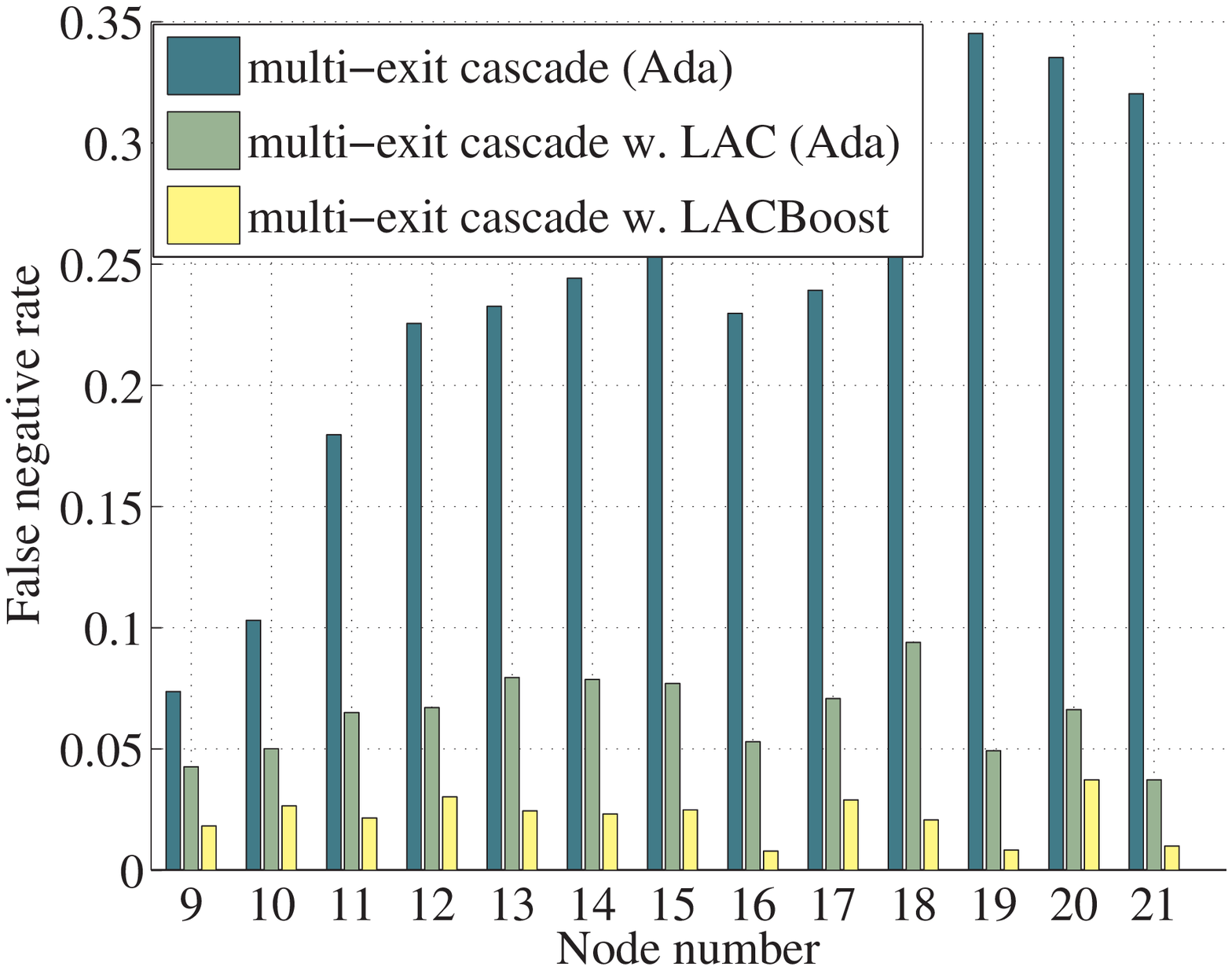}
        \includegraphics[width=.2438\textwidth]{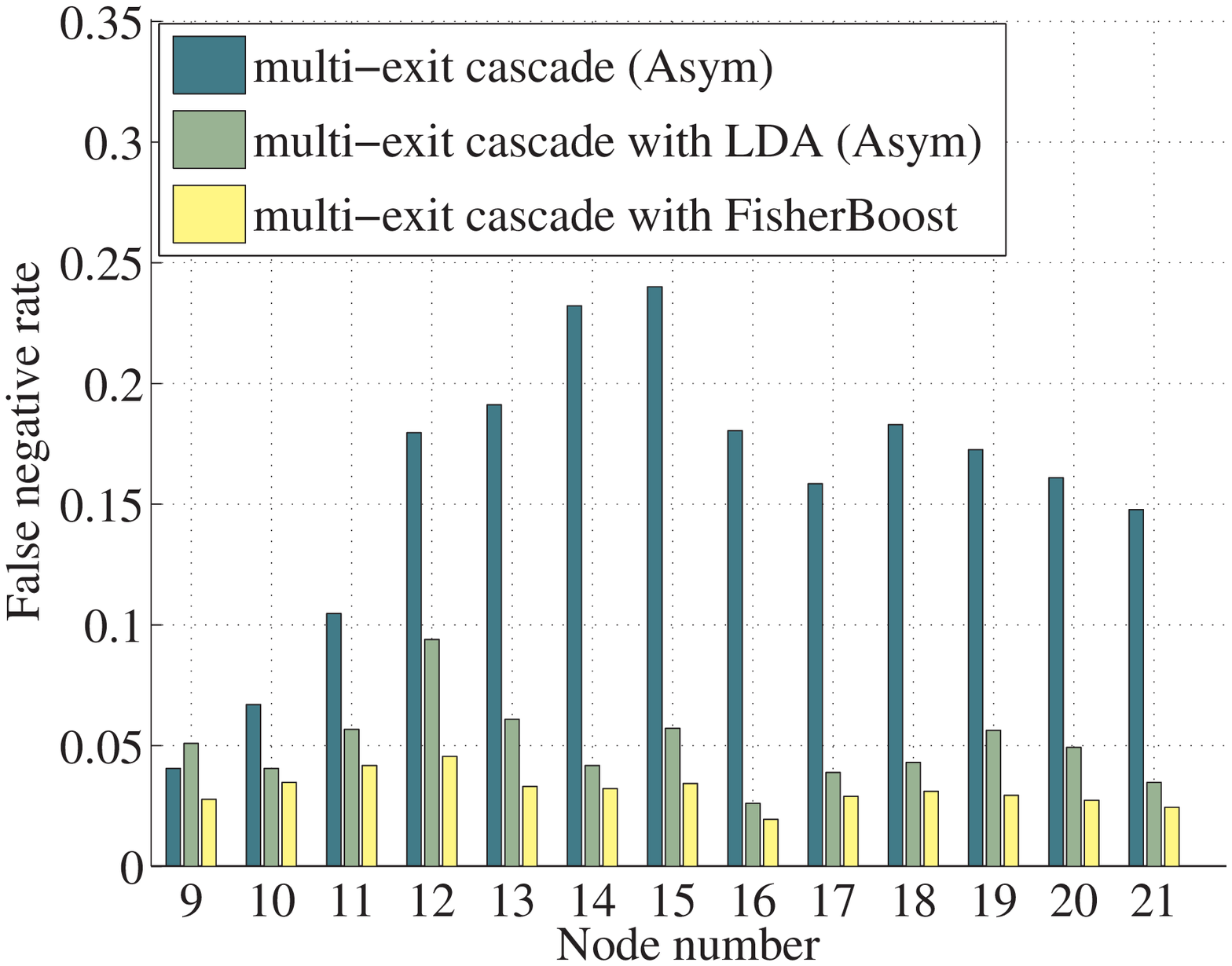}
        \includegraphics[width=.2438\textwidth]{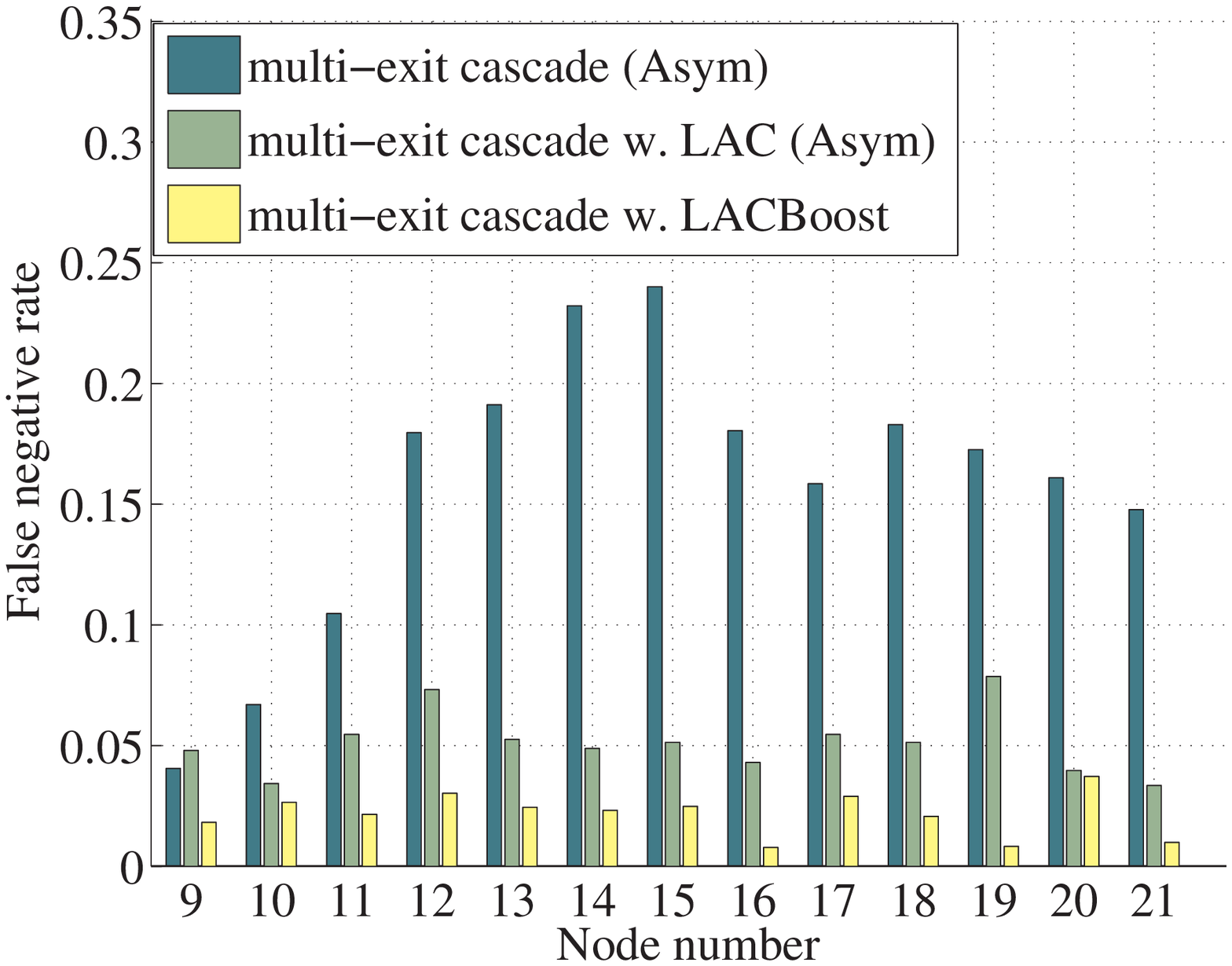}
    \end{center}
    \caption{Node performances on the validation data (INRIA pedestrian detection). 
       ``Ada'' means that features are selected using AdaBoost;
    ``Asym'' means that features are selected using AsymBoost.
    }
    \label{fig:node_ped}
\end{figure*}

\begin{figure*}[t!]
    \begin{center}
        \includegraphics[width=.38\textwidth]{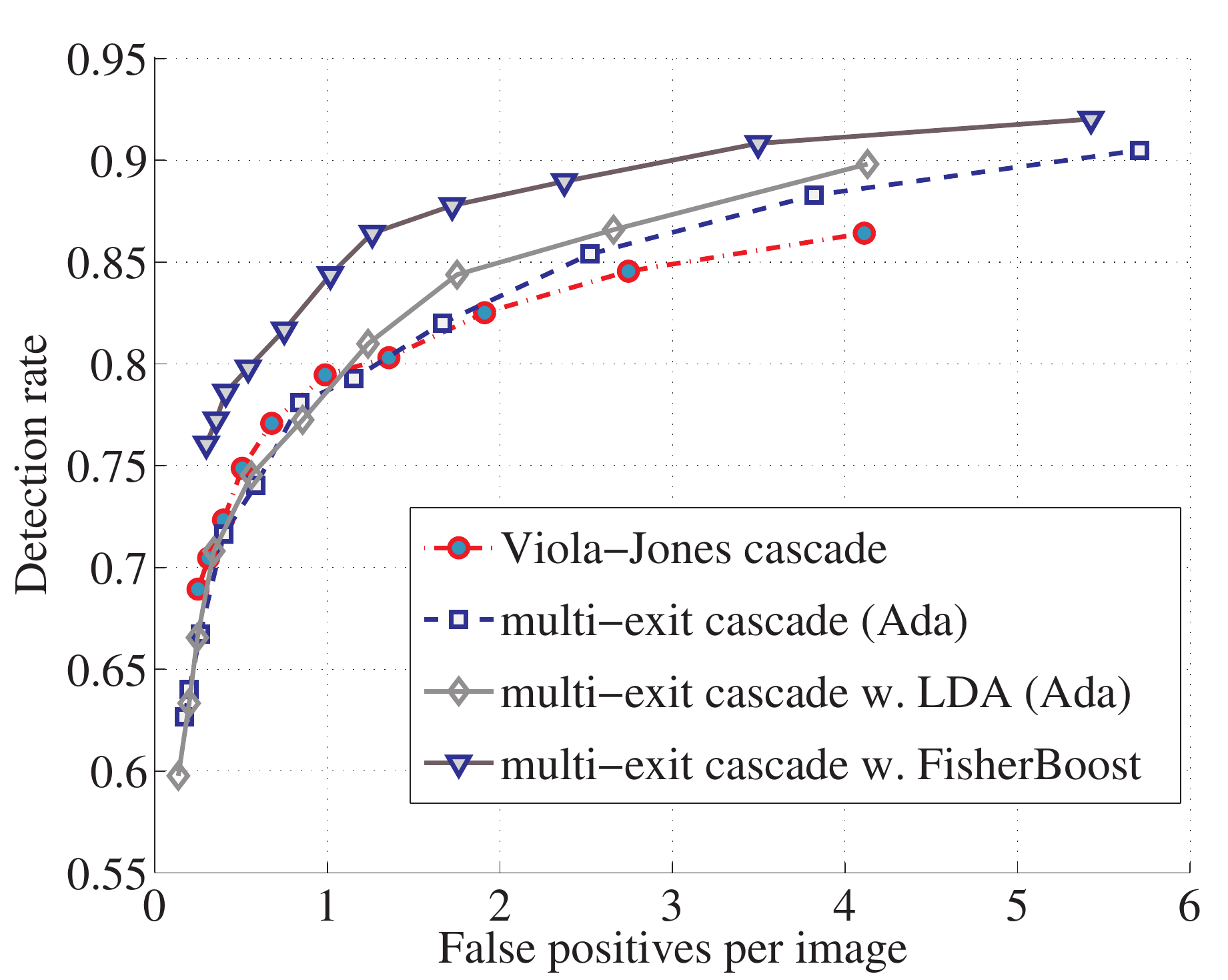}
        \includegraphics[width=.38\textwidth]{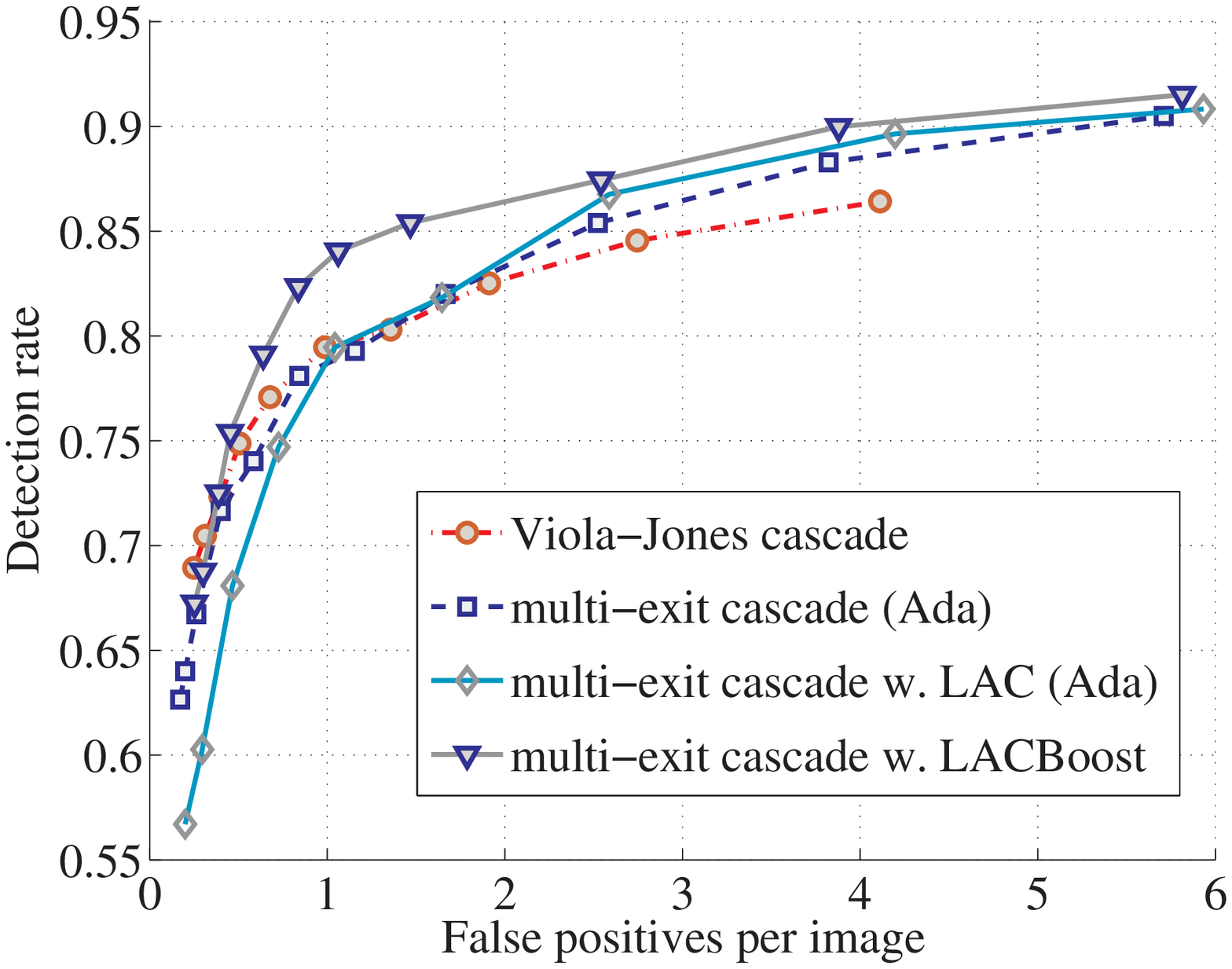}
        \includegraphics[width=.38\textwidth]{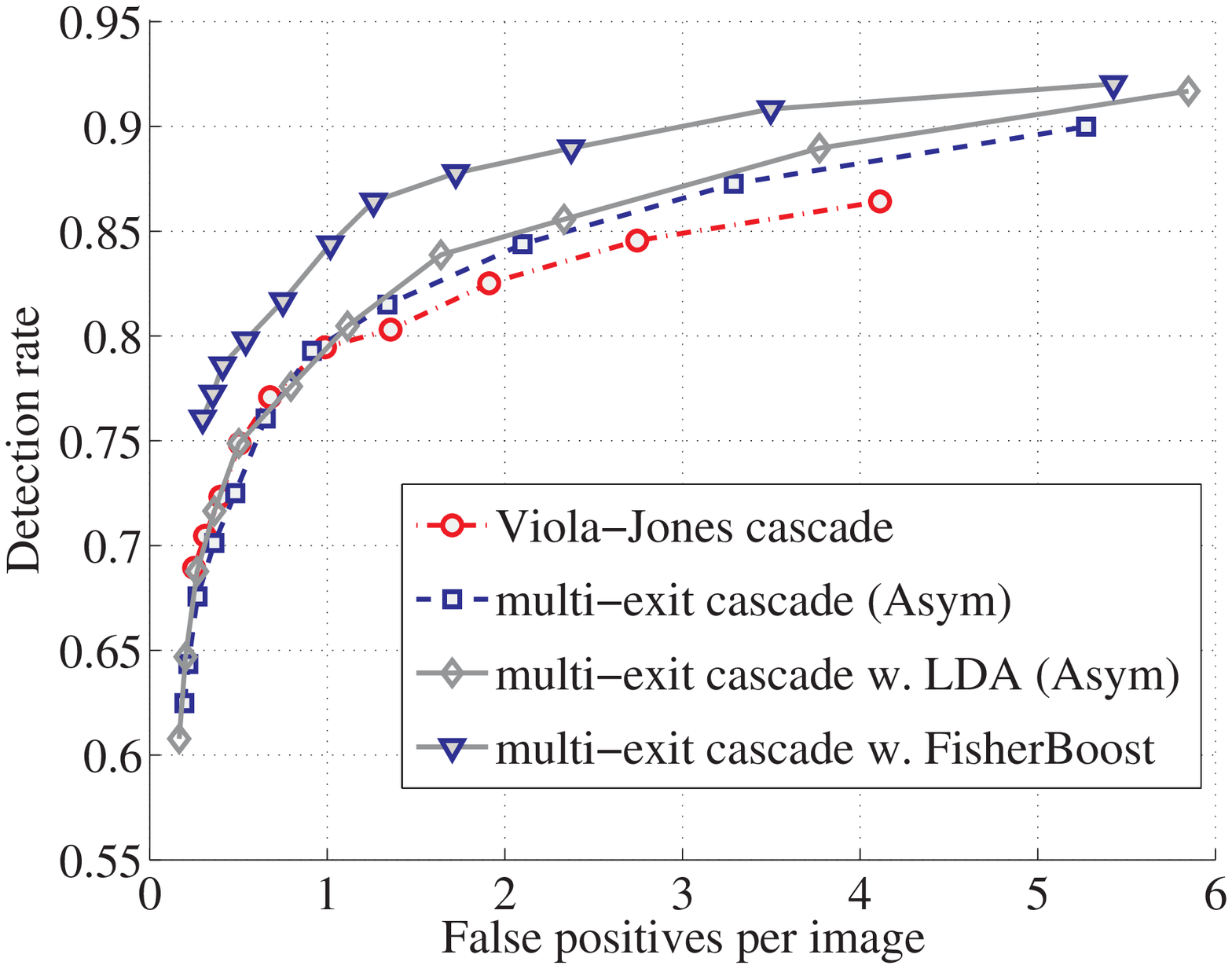}
        \includegraphics[width=.38\textwidth]{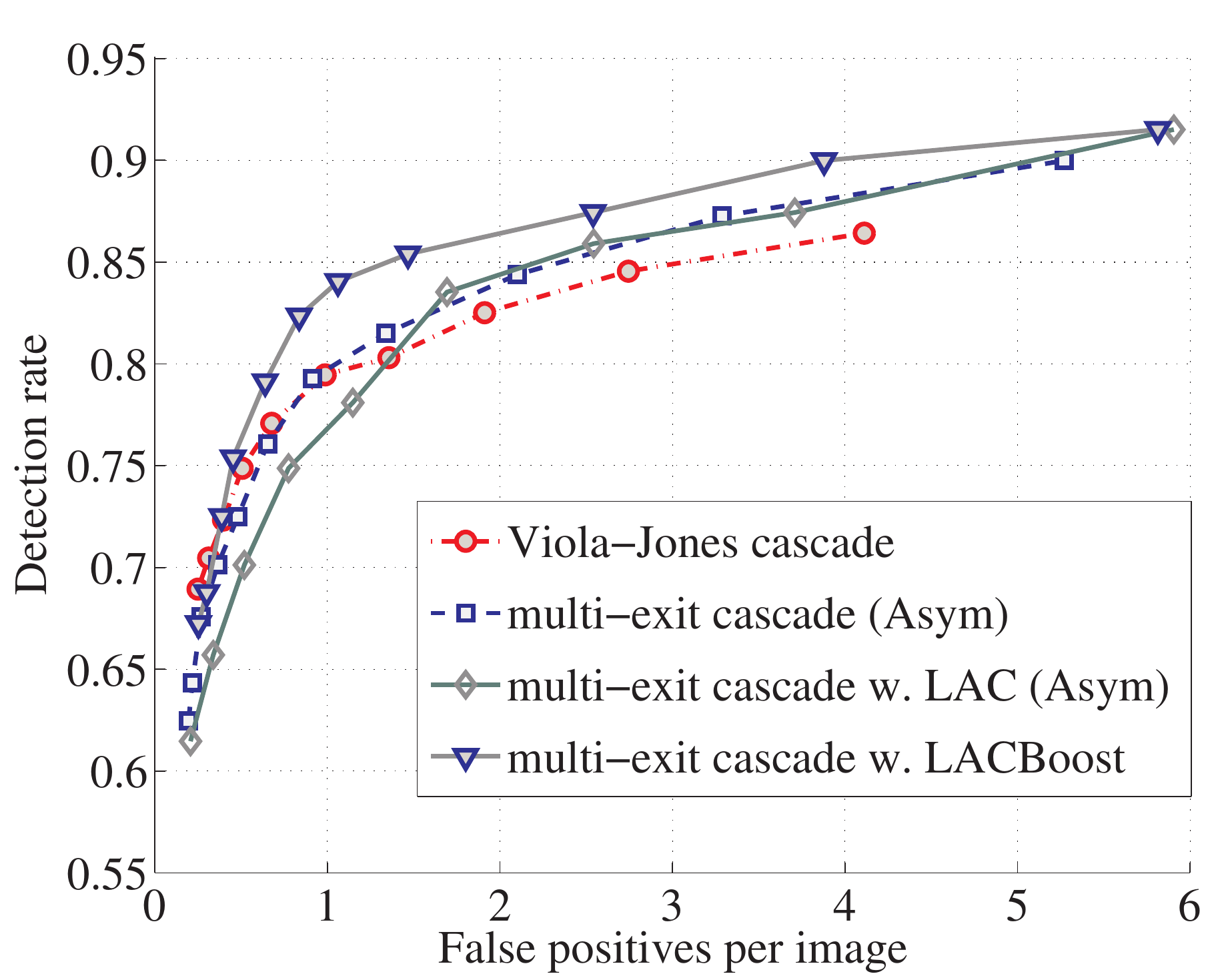}
    \end{center}
    \caption{Cascade performances in ROC  
    (false positives per image versus detection rate) on the INRIA pedestrian 
    data.
    ``Ada'' means that features are selected using AdaBoost. Viola-Jones cascade
    is the method in
    \cite{viola2004robust} with weighted LDA on HOG as weak classifiers.
      ``Asym'' means that features are selected using AsymBoost.
    }
    \label{fig:ROC_Ped}
\end{figure*}

    In this experiment, we use the INRIA pedestrian data \cite{Dalal2005HOG} 
    to compare the performance of our algorithms with other
    state-of-the-art methods.  There are $2,416$ cropped mirrored pedestrian images and $1,200$ large
    background images in the training set. 
    The test set contains $288$ images containing $588$ annotated pedestrians and $453$
    non-pedestrian images.

    Each training sample is scaled to $64 \times 128$ pixels with $16$ 
    pixels additional borders for preserving
    the contour information.  
    During testing, the detection scanning window is resized to $32 \times 96$ pixels
    to fit the human body.  
    We have used the histogram of oriented gradient (HOG) features in our experiments.
    Instead of using fixed-size blocks ($105$ blocks of size $16 \times 16$ pixels) as in 
    Dalal and Triggs \cite{Dalal2005HOG},
    we define blocks with different scales (minimum $12 \times$ 12, and maximum $64 \times 128$) 
    and width-length ratios ($1:1, 1:2, 2:1, 1:3$, and $3:1$).  
    Each block is divided into $2 \times 2$ cells, and the HOG 
    in each cell are summarized into $9$ bins.  
    Thus, totally $36$-dimensional features are generated for each block.
    There are in total $ 7,735 $ blocks for a $64 \times 128$-pixel image. 
    $ \ell_1$-norm normalization is then applied to the feature vector. 
    Furthermore, we use integral histograms to speed up the computation as in \cite{zhu2006fast}. 
    At each iteration, we randomly sample $10\%$ of the whole possible blocks for training a weak
    classifier.
    We have used weighted linear discriminant analysis (WLDA) as weak classifiers, same as in
    \cite{paisitkriangkrai2008fast}. Zhu \etal used linear support vector machines as weak
    classifiers \cite{zhu2006fast}, which can also be used as weak classifiers here.

    For all the approaches evaluated, we use the same cascade structure with $21$ nodes and totally 
    $ 612 $ weak classifiers 
    (the first three nodes have four weak classifiers for each, 
    and the last six have $60$ weak classifiers).

    The positive examples are from the INRIA training set and remain the same for each node. 
    The negative examples are obtained by collecting the false positives of currently learned 
    cascade from the large background images with bootstrapping. 
    The parameter $\theta$ of our FisherBoost and LACBoost is selected from   
     $\{
      \frac{1}{10}, 
      \frac{1}{12},
      \frac{1}{14},
      $
      $
      \frac{1}{16},
      \frac{1}{18},
      \frac{1}{20}
     \}$.
     We have not carefully selected $ \theta $ in this experiment. 
     Ideally, cross-validation should be used to pick the best value of $ \theta $ by using
     an independent cross-validation data set. 
     Here because INRIA data set does not have many labeled positive data,
     we have used the same $2,416$ training positives, plus 
     $500$ additional negative examples obtained by bootstrapping for validation.  
     Improvement might be obtained if a large cross-validation data set was available.

    The scale ratio of input image pyramid is $1.09$ and the scanning step-size is $8$ pixels.
    The overlapped detection windows are merged using the simple heuristic strategy proposed 
    by Viola and Jones \cite{viola2004robust}.
    It takes about $5$ hours to train the entire cascade pedestrian detector on the workstation.

    For the same reason described in the face detection section,
    the FisherBoost/LACBoost and Wu \etal's LDA/LAC post-processing
    are applied to the cascade from about the $3$th node, instead of the first node.

    Since the number of weak classifiers of our pedestrian detector is small, 
    we use the original matrix $\Q$ rather than the approximate diagonal matrix in this experiment.

    The Pascal VOC detection  Challenge criterion  
    \cite{pascal2010,paisitkriangkrai2008fast} is adopted here. A detection result is 
    considered true or
    false positive based on the area of overlap with the ground truth
    bounding box. To be considered a correct detection, the area
    of overlap between the predicted bounding box and ground
    truth bounding box must exceed $40\%$ of the union of the prediction and the ground truth. 
    We use the false positives per image (FPPI) metric as suggested in \cite{dollar2009}.

    Fig.~\ref{fig:node_ped} shows the node performances of various configurations on the INRIA
    pedestrian data set. Similar results are obtained as in the face detection experiment: again, 
    our new boosting algorithms significantly outperform 
    AdaBoost \cite{viola2004robust}, AsymBoost \cite{Viola2002Fast},
    and are considerably better than  Wu \etal's post-processing methods \cite{wu2008fast} 
    at most nodes.  
    Compared with the face detection experiment, we obtain more obvious improvement on
    detecting pedestrians. This may be because pedestrian detection is much more 
    difficult and there is more room for improving the detection performance.

    We have also compared the ROC curves of complete cascades, which are plotted in
    Fig. \ref{fig:ROC_Ped}.
    FisherBoost and LACBoost perform better than all other compared methods. 
    In contrast to the results of the detection rate for each node, 
    LACBoost is slightly worse than FisherBoost in some cases 
    (also see Fig. \ref{fig:HOGSVM}).
    In general, LAC and LDA post-processing improve those without post-processing.
    Also we can see that LAC post-processing performs slightly worse than
    other methods at the low false positive part. 
    Probably LAC post-processing over-fits the training data
    in this case. 
    Also, in the same condition, FisherBoost/LDA post-processing 
    seems to perform better than LACBoost/LAC post-processing.  We will discuss this issue in the
    next section.

\begin{figure}[h!]
    \begin{center}
        \includegraphics[width=.38\textwidth]{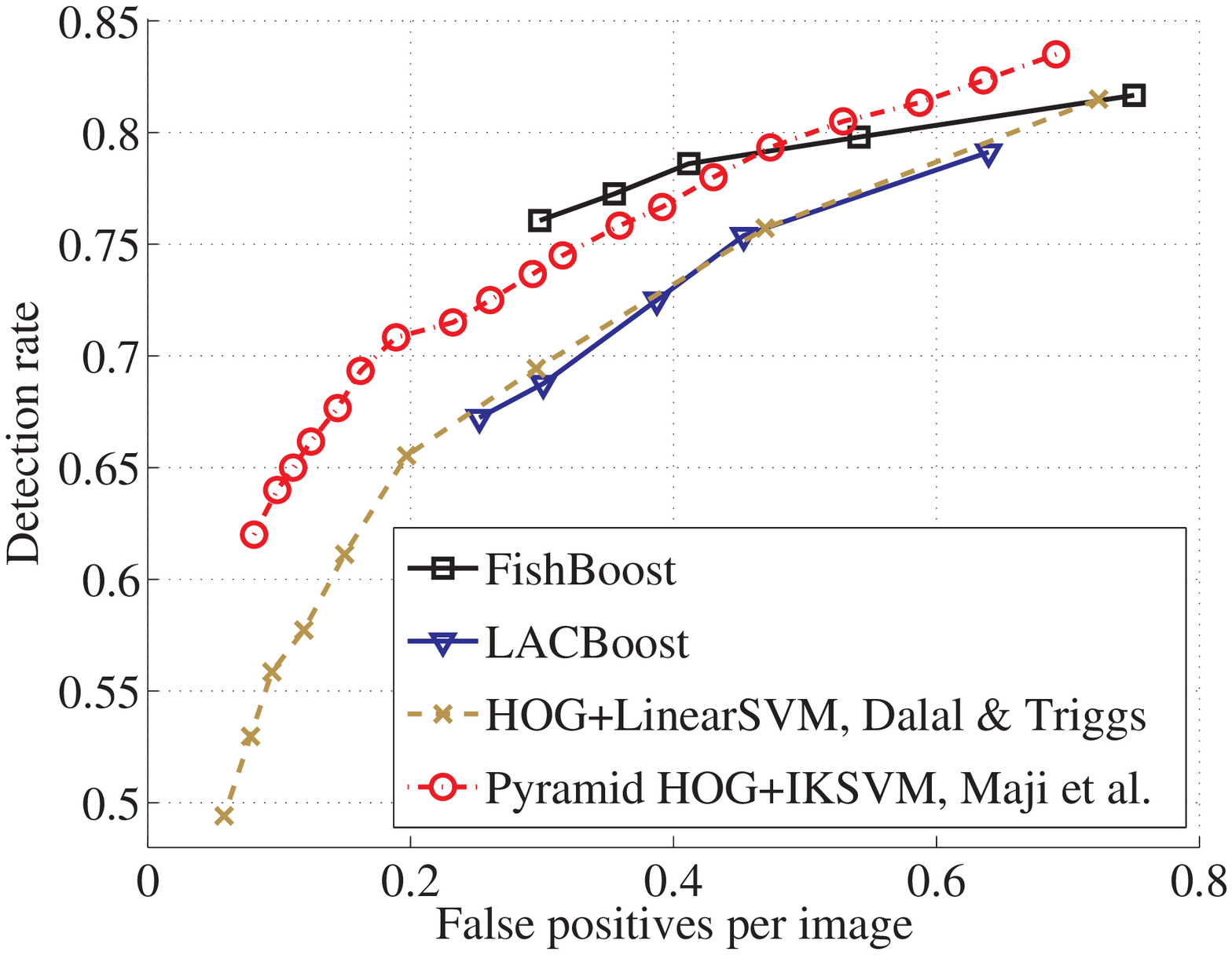}
    \end{center}
    \caption{Comparison between FisherBoost, LACBoost, 
    HOG with the linear SVM of Dalal and Triggs \cite{Dalal2005HOG},
    and Pyramid HOG with the histogram intersection kernel SVM (IKSVM)
    of Maji \etal \cite{maji2008}.
    In detection rate, 
    our FisherBoost improves Dalal and Triggs' approach by over $7\%$ 
    at FPPI of $0.3$ on INRIA's pedestrian detection dataset.}
    \label{fig:HOGSVM}
\end{figure}

    In summary, FisherBoost or LACBoost has superior performance than all the other algorithms.
    We have also compared FisherBoost and LACBoost with HOG with linear SVM of 
    Dalal and Triggs \cite{Dalal2005HOG}, and the state-of-the-art on pedestrian detection---the pyramid HOG (PHOG) with
    histogram intersection kernel SVM (IKSVM) \cite{maji2008}.
%
    We keep all experiment configurations the same, except that 
    HOG with linear SVM and PHOG with IKSVM have employed 
    sophisticated mean shift to merge overlapped detection windows,
    while ours use the simple heuristic of Viola and Jones \cite{viola2004robust}.
    The results are reported in Fig. \ref{fig:HOGSVM} and 
    the observations are: 
 \begin{enumerate}
 \item
     LACBoost performs similarly to Dalal and Triggs' \cite{Dalal2005HOG};
 \item
     PHOG with nonlinear IKSVM performs much better than Dalal and Triggs' HOG with linear SVM.
    This is consistent with the results reported in \cite{maji2008};
 \item
    FisherBoost performs better than PHOG with IKSVM at the low FPPI part (lower than $ 0.5 $).  
 \end{enumerate}
    Note that our  FisherBoost and LACBoost use HOG, instead of PHOG. 
    It is not clear how much gain PHOG has contributed to the final detection performance in
    the case of PHOG plus IKSVM of \cite{maji2008}.\footnote{On object categorization, PHOG seems to
    be a better descriptor than HOG \cite{bosch07}. 
    It is likely that our detectors may perform better if we replace HOG with PHOG.
    We leave this as future work.
    }

    In terms of efficiency in the test phase for each method, 
    our FisherBoost or LACBoost needs about $ 0.7 $ seconds on average on the INRIA test data 
    (no image re-scaling is applied and single CPU core is used on our workstation). 
    PHOG with IKSVM needs about $ 8.3 $ seconds on average. So our boosting framework 
    is about $ 14 $ times faster than PHOG with IKSVM.  
    Note that HOG with linear SVM is much slower ($ 50 $ to $ 70 $ times slower than
    the boosting framework), which agrees  with the results in \cite{zhu2006fast}.

    In both face and pedestrian detection experiments, we have observed that
    FisherBoost performs slightly better than LACBoost. 
    We try to elaborate this on the following.

\subsection{Why LDA Works Better Than LAC}

    Wu \etal observed that in many cases, 
    LDA post-processing gives better 
    detection rates on MIT+CMU face data than LAC \cite{wu2008fast}. 
    When using the LDA criterion to select Haar features, 
    Shen \etal \cite{GSLDA2010Shen} tried different combinations 
    of the two classes' covariance matrices 
    for calculating the within-class matrix: 
    $ \C_w =  \bSigma_1 + \gamma \bSigma_2  $ with $ \gamma $ a nonnegative constant.
    It is easy to see that $ \gamma = 1 $ and $ \gamma = 0 $ 
    correspond to LDA and  LAC, respectively. 
    They found that setting $ \gamma \in  [0.5, 1] $
    gives best results on the MIT+CMU face detection task
    \cite{GSLDA2010Shen,Paisitkriangkrai2009CVPR}.

    According to the analysis in this work, LAC is optimal 
    if the distribution of $[h_1( \x ), h_2(\x), \cdots, h_n(\x) ]$ on 
    the negative data  is symmetric. In practice, this requirement may not be
    perfectly satisfied, especially for the first several node classifiers.
    At the same time, these early nodes have much more impact 
    on the final cascade's detection performance than other nodes. 
    This may explain why in some cases the improvement of LAC is not
    significant. 
    However, this does not explain why LDA (FisherBoost) works;
    and sometimes it is even better
    than LAC (LACBoost). At the first glance,
    LDA (or FisherBoost) by no means  explicitly considers the 
    imbalanced node learning objective.  
    Wu \etal did not have a plausible explanation either 
    \cite{wu2008fast,wu2005linear}. 
    
    \begin{proposition}
        For object detection problems, 
        the Fisher linear discriminant analysis can be viewed as a regularized version
        of linear asymmetric classification. In other words, 
        linear discriminant analysis has already considered
        the asymmetric learning objective. 
    \end{proposition}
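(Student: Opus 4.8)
The plan is to exhibit the Fisher LDA criterion \eqref{EQ:LDA1} as the LAC criterion \eqref{eq:11d} augmented by a Tikhonov (ridge) penalty, by examining the structure of the negative-class covariance $\bSigma_2$ in the object-detection setting. Comparing \eqref{eq:11d} and \eqref{EQ:LDA1}, the \emph{only} difference between the two objectives is the extra term $\w^\T\bSigma_2\w$ in the LDA denominator; equivalently, their closed-form solutions differ only in whether $\bSigma_2$ is added to $\bSigma_1$ before inversion (compare \eqref{eq:11d_SOL}). It therefore suffices to understand what $\w^\T\bSigma_2\w$ contributes here.

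First I would work in the boosting feature space, where the data are the weak-classifier output vectors $\H(\x_i)$ with $h_j(\x_i)\in\{-1,+1\}$, so that $\bSigma_2$ is the covariance of these $\pm1$ vectors over the negative class. The key elementary fact is that $h_j(\x)^2\equiv 1$, so the uncentered second-moment matrix $\tfrac{1}{m_2}\sum_{i:\,y_i=-1}\H(\x_i)\H(\x_i)^\T$ has \emph{exactly} unit diagonal, while its $(j,k)$ off-diagonal entry is the empirical correlation $\tfrac{1}{m_2}\sum_{i:\,y_i=-1}h_j(\x_i)h_k(\x_i)$ between two weak classifiers on negative data. I would then argue that, because the background patches used as negatives are enormously diverse and abundant and distinct weak classifiers respond to essentially unrelated local image structure, these outputs are approximately mutually uncorrelated on the negative class, so the off-diagonal entries are negligible. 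This is precisely the near-independence of weak classifiers already invoked in the paper, and it is corroborated by the limiting behaviour $\Q_2\to\tfrac1m{\bf I}$ as $m_2\to\infty$ in \eqref{EQ:Q10}. The conclusion of this step is $\bSigma_2\approx c\,{\bf I}$ for some constant $c>0$.

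Granting this, the proposition follows immediately. Substituting $\bSigma_2\approx c\,{\bf I}$ gives $\w^\T(\bSigma_1+\bSigma_2)\w\approx\w^\T\bSigma_1\w+c\,\lVert\w\rVert_2^2$, so that \eqref{EQ:LDA1} becomes
\[
\max_{\w\neq{\bf 0}}\;
\frac{\w^\T(\bmu_1-\bmu_2)}
     {\sqrt{\w^\T\bSigma_1\w+c\,\lVert\w\rVert_2^2}},
\]
which is exactly the LAC objective \eqref{eq:11d} with an added ridge regularizer $c\lVert\w\rVert_2^2$. Equivalently, the LDA solution $(\bSigma_1+c\,{\bf I})^{-1}(\bmu_1-\bmu_2)$ is the Tikhonov-regularized form of the LAC solution $\bSigma_1^{-1}(\bmu_1-\bmu_2)$ of \eqref{eq:11d_SOL}. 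Since LDA thereby retains exactly the asymmetric ingredients $\bmu_1-\bmu_2$ and $\bSigma_1$ of LAC and merely stabilizes the inverse, it already encodes the asymmetric node-learning objective, as claimed.

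The step I expect to be the main obstacle is the justification of $\bSigma_2\approx c\,{\bf I}$: the unit-diagonal part is exact, but the vanishing of the off-diagonal (near-uncorrelatedness of weak classifiers on the negative class) is an approximate, data-dependent statement rather than an algebraic identity, and the centered covariance $\bSigma_2$ further differs from the uncentered second moment through $\bmu_2\bmu_2^\T$ (so the diagonal is $1-[\bmu_2]_j^2$ rather than exactly constant). I would therefore present the result as holding \emph{in approximation}, consistent with the wording of the proposition, leaning on the independence assumption and on \eqref{EQ:Q10} for the large negative sets typical of cascade training. As a by-product this viewpoint also explains the empirically observed edge of LDA over LAC: when $\bSigma_1$ is ill-conditioned or estimated from relatively few positives---as in the early nodes, where the symmetry assumption is weakest---the ridge term $c\,{\bf I}$ curbs overfitting, whereas plain LAC inverts a noisy $\bSigma_1$ directly.
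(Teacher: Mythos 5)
Your proposal is correct and follows essentially the same route as the paper: both arguments reduce to showing that the negative-class covariance $\bSigma_2$ of the weak-classifier outputs is approximately a scaled identity (unit diagonal from $h_j(\x)^2\equiv 1$, vanishing off-diagonal from approximate independence of weak classifiers on background patches), whence the LDA within-class matrix is the LAC one plus a ridge term $v\,\frac{m_2}{m}{\bf I}$. Your treatment of the centered-versus-uncentered moment is in fact slightly more careful than the paper's; the only small caveat is that citing $\Q_2\to\tfrac1m{\bf I}$ from \eqref{EQ:Q10} as corroboration conflates the $m_2\times m_2$ sample-space matrix with the $n\times n$ feature-space covariance $\bSigma_2$, so that remark should be dropped.
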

    \begin{IEEEproof}
        For object detection such as face and pedestrian detection considered here,
        the covariance matrix of the negative class is close to a scaled 
        identity matrix. 
        In theory, the negative data can be anything other than
        the target. Let us look at one of the off-diagonal elements
        \begin{align}
        \bSigma_{ij, i\neq j}  
        &=  {\mathbb E} [ ( h_i( \x ) - {\mathbb E}[ h_i( \x ) ]  ) 
                         ( h_j( \x ) - {\mathbb E}[ h_j( \x ) ]  ) 
                       ]
        \notag 
        \\
        &= {\mathbb E} [  h_i ( \x )  h_j( \x ) ] \approx 0.
        \label{EQ:900}
        \end{align}
        Here $ \x $ is the image feature of
        the negative class. 
        We can assume that $ \x $ is i.i.d. and 
        approximately, $ \x $ follows a uniform distribution.
        So $ {\mathbb E} [ h_{i,j} ( \x )]   = 0 $.
        That is to say, on the negative class,
        the chance of $ h_{i,j} ( \x ) = +1 $
        or $ h_{i,j} ( \x ) = -1 $ is the same, which is $ 50\% $. 
        Note that this does not apply to the positive class
        because $ \x $ of the positive class is not 
        uniformly distributed.
        The last equality of \eqref{EQ:900} 
        uses the fact that weak classifiers 
        $ h_i( \cdot)  $ and $ h_j( \cdot)  $
        are approximately statistically independent.  
        Although this assumption may not hold in practice
        as pointed out in \cite{shen2010dual},
        it could be a plausible approximation.

        Therefore, the off-diagonal elements of  $ \bSigma $
        are almost all zeros; and $\bSigma$ is a diagonal matrix.
        Moreover in object detection, it is a reasonable assumption
        that the diagonal elements 
        $ {\mathbb E} [  h_j ( \x )  h_j( \x ) ] $
        $ (j=1,2,\cdots ) $ have similar values. 
        Hence, $ \bSigma_2 \approx v \bf I$ holds, 
        with $ v $ a positive constant.
        
        So for object detection,
        the only difference between LAC and LDA is that, 
        for LAC $ \C_w = \frac{m_1}{m} \bSigma_1 $
        and for LDA $ \C_w = \frac{m_1}{m}
                      \bSigma_1 
                      + v \cdot \frac{m_2}{m} \bf I $.
        This concludes the proof.     
    \end{IEEEproof}
    
    It seems that this regularization term can be the reason why
    the LDA post-processing approach
    and FisherBoost works even better than LAC and LACBoost
    in object detection. 
    However, in practice, the negative data are not
    necessarily uniformly distributed. Particularly,
    in latter nodes, bootstrapping makes negative data
    to be those difficult ones.  
    In this case, 
    it may deteriorate  the performance by completely ignoring
    the negative data's covariance information.

    In FisherBoost, this regularization is equivalent to
    have a $ \ell_2 $ norm regularization   on the primal variable 
    $ \w $, $ \Vert \w \Vert_2^2 $,
    in the objective function of the QP problem
    in Section \ref{sec:LACBoost}. 
    Machine learning algorithms like Ridge regression  use
    $\ell_2$ norm regularization.

%

    Fig.~\ref{fig:cov} shows some empirical evidence 
    that $ \bSigma_2 $ is close to a scaled identity matrix. 
    As we can see, the diagonal elements are much larger than those
    off-diagonal elements (off-diagonal ones are close to zeros).

\begin{figure}[t!]
    \centering
          \includegraphics[width=.35\textwidth]{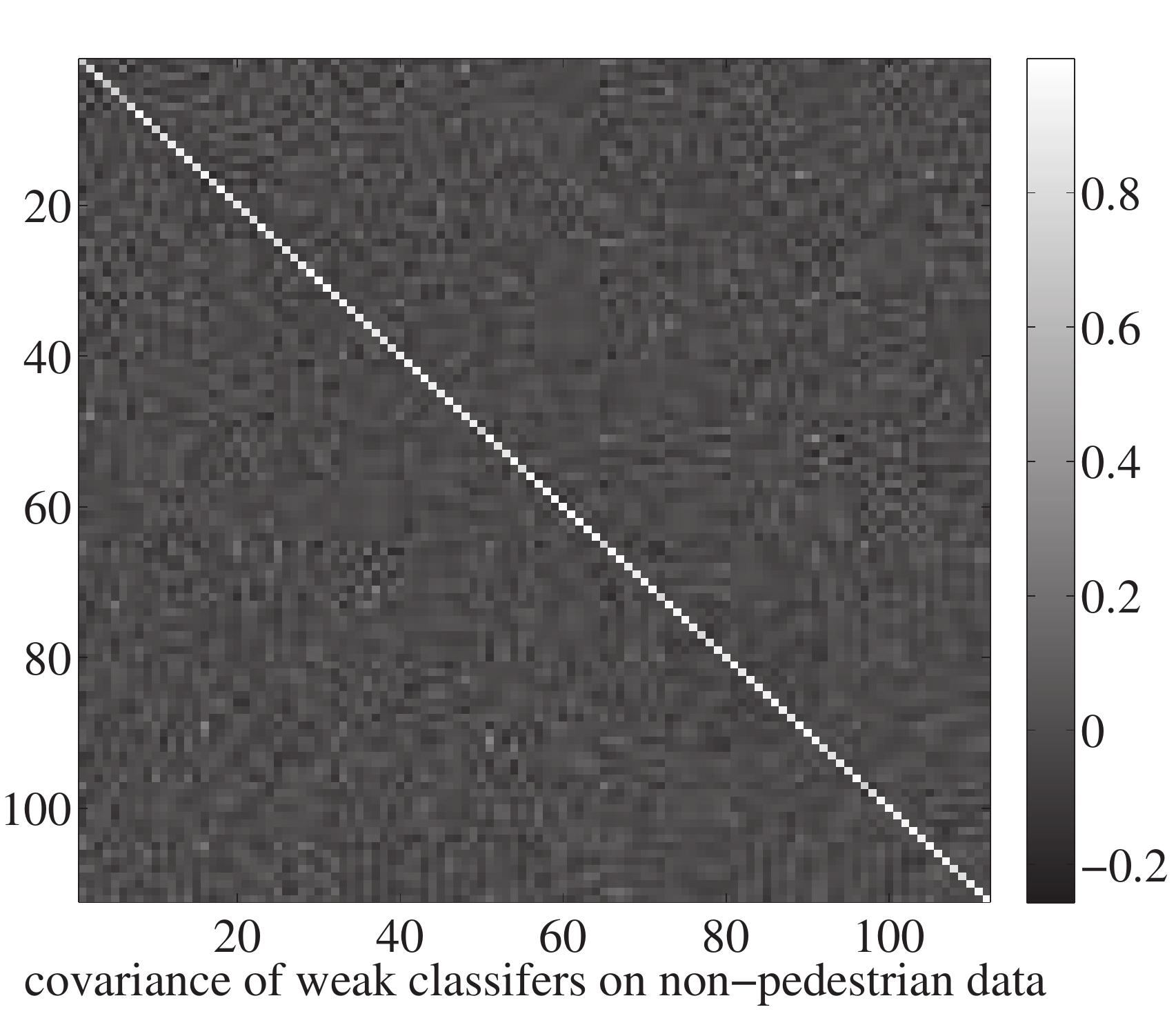}
    \caption{The covariance matrix of the first $112$ weak classifiers selected by
    FisherBoost on non-pedestrian data. It may be approximated by a scaled identity
    matrix. 
    On average, the magnitude of diagonal elements is $20$ times larger than those
    off-diagonal elements.  
    }
    \label{fig:cov}
\end{figure}

\section{Conclusion}
\label{sec:con}

    By explicitly taking into account the node learning goal in cascade classifiers, we have
    designed new boosting algorithms for more effective object detection.  Experiments validate the
    superiority of the methods developed, which we have labeled FisherBoost and LACBoost.  We have
    also proposed the use of entropic gradient descent to efficiently implement FisherBoost and
    LACBoost. The proposed algorithms are easy to implement and can be applied to other asymmetric
    classification tasks in computer vision.  We aim in future to design new asymmetric boosting
    algorithms by exploiting asymmetric kernel classification methods such as \cite{Tu2010Cost}.  
    Compared with stage-wise AdaBoost, which is parameter-free, our boosting algorithms need to tune 
    a parameter.  
    We are also interested in developing parameter-free 
    stage-wise boosting that considers the node learning
    objective.

\bibliographystyle{ieee}
\bibliography{lac}



\begin{IEEEbiography}
        [{\includegraphics[width=1in,height=1.25in,clip,keepaspectratio]{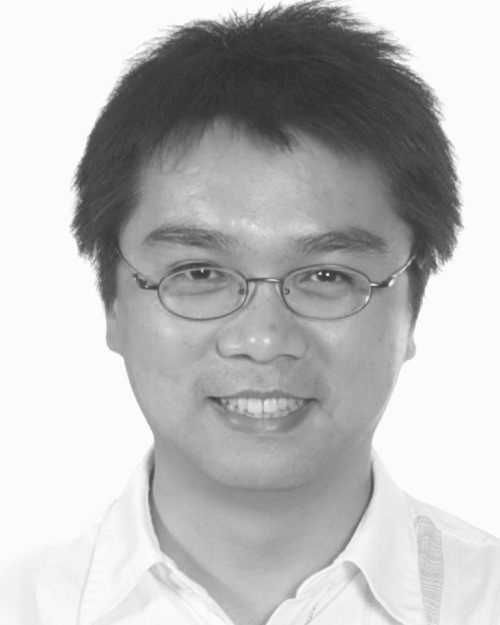}}]
        {Chunhua Shen}
        obtained a Ph.D. degree in Computer Vision from University of Adelaide,
        Australia in 2006, an M.Phil. degree in Applied Statistics from Australian
        National University in 2009, an M.Sc. degree and a Bachelor in 2002 and 1999 
        respectively, both from Nanjing University, Nanjing, China. 
        
        Since Oct. 2005, he has been working with the computer vision program,
        NICTA (National ICT Australia), Canberra Research Laboratory,
        where he is currently a senior researcher. 

        His main research interests include statistical machine learning and 
        computer vision. 
        His recent research focuses on boosting algorithms and their applications in 
        real-time object detection.
        He has published over $50$ peer-reviewed papers in international conferences and journals. 
\end{IEEEbiography}

\vspace{-2em}

\begin{IEEEbiography}
        [{\includegraphics[width=1in,height=1.25in,clip,keepaspectratio]{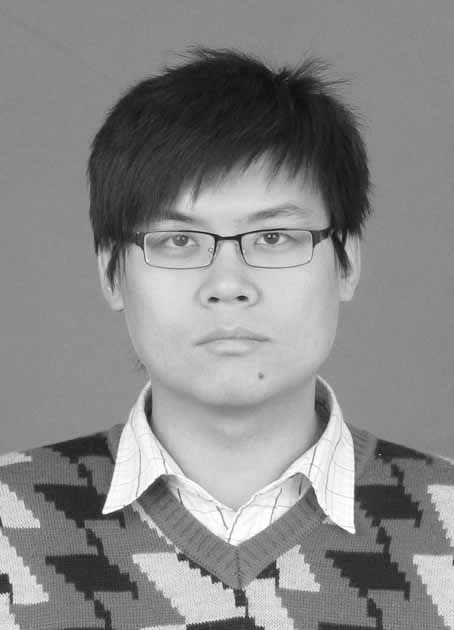}}]
        {Peng Wang} 
        is currently pursuing his  Ph.D. degree at 
        the School of Automation Science and Electrical Engineering,
        Beihang University, Beijing, China.
        He received the B.Sc degree from Beihang University in 2004. 
        His research interests include machine learning and computer vision. 
        From Sept. 2008 to Sept. 2010, he was a visiting scholar at NICTA, Canberra Research
        Laboratory, Canberra, Australia. 
\end{IEEEbiography}

\vspace{-2em}

\begin{IEEEbiography}
     [{\includegraphics[width=1in,height=1.25in,clip,keepaspectratio]{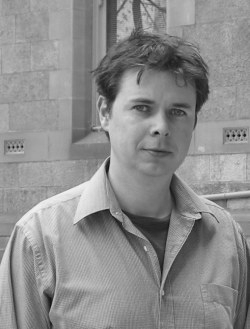}}]
    {Anton van den Hengel} 
        is the founding director of The Australian Center for Visual
        Technologies (ACVT), an interdisciplinary research center focusing on innovation and
        education in the production and analysis of visual digital media.  Prof. van den Hengel
        received a Ph.D. in Computer Vision in 2000, a Masters degree in Computer Science in 1994, a
        Bachelor of Law in 1993, and a Bachelor of Mathematical Science in 1991, all from The
        University of Adelaide.
        Prof. van den Hengel won the CVPR Best Paper Award in 2010. 
\end{IEEEbiography}

\end{document}